%% file: main.tex
\documentclass{article}

\usepackage{microtype}
\usepackage{graphicx}
\usepackage{subcaption}
\usepackage{booktabs} % for professional tables

\usepackage{xcolor}         % colors

\usepackage{hyperref}
\usepackage{url}            % simple URL typesetting

\usepackage[preprint]{icml2026}

\usepackage{amsmath,amsfonts}
\usepackage{amssymb}
\usepackage{mathtools}
\usepackage{amsthm}

\usepackage{bm}
\usepackage{bbm}
\usepackage{enumerate}
\usepackage{comment}
\usepackage{nicefrac}  
\usepackage{enumitem}

\usepackage[capitalize,noabbrev]{cleveref}

\theoremstyle{plain}
\newtheorem{theorem}{Theorem}[section]

\theoremstyle{definition}
\newtheorem{definition}[theorem]{Definition}

\theoremstyle{remark}

\renewcommand{\thefigure}{\thesection.\arabic{figure}}
\renewcommand{\thetable}{\thesection.\arabic{table}}

\makeatletter
\newcommand*\rel@kern[1]{\kern#1\dimexpr\macc@kerna}
\newcommand*\widebar[1]{%
  \begingroup
  \def\mathaccent##1##2{%
    \rel@kern{0.8}%
    \overline{\rel@kern{-0.8}\macc@nucleus\rel@kern{0.2}}%
    \rel@kern{-0.2}%
  }%
  \macc@depth\@ne
  \let\math@bgroup\@empty \let\math@egroup\macc@set@skewchar
  \mathsurround\z@ \frozen@everymath{\mathgroup\macc@group\relax}%
  \macc@set@skewchar\relax
  \let\mathaccentV\macc@nested@a
  \macc@nested@a\relax111{#1}%
  \endgroup
}
\makeatother

\newcommand{\Two}{I\hspace{-1.2pt}I}
\newcommand{\Three}{I\hspace{-1.2pt}I\hspace{-1.2pt}I}
\newcommand{\Four}{I\hspace{-1.2pt}V}

\allowdisplaybreaks[3]

\usepackage[textsize=tiny]{todonotes}

\icmltitlerunning{Pool-based Active Learning as Noisy Lossy Compression}

\begin{document}

\twocolumn[
  \icmltitle{Pool-based Active Learning as Noisy Lossy Compression: \\ Characterizing Label Complexity via Finite Blocklength Analysis}

  \icmlsetsymbol{equal}{*}

    \begin{icmlauthorlist}
        \icmlauthor{Kosuke Sugiyama}{waseda}
        \icmlauthor{Masato Uchida}{waseda}
    \end{icmlauthorlist}
    
    \icmlaffiliation{waseda}{Major in Computer Science and Communications Engineering, Waseda University, Tokyo, Japan}
    
    \icmlcorrespondingauthor{Kosuke Sugiyama}{kohsuke0322@asagi.waseda.jp}
    \icmlcorrespondingauthor{Masato Uchida}{m.uchida@waseda.jp}
    
    \icmlkeywords{information-theoretic learning theory, pool-based active learning, finite blocklength analysis, noisy lossy compression, label complexity, generalization error}
    
  \vskip 0.3in
]

\printAffiliationsAndNotice{}  % no special notice (required even if empty)

\begin{abstract}

This paper proposes an information-theoretic framework for analyzing the theoretical limits of pool-based active learning (AL), in which a subset of instances is selectively labeled.
The proposed framework reformulates pool-based AL as a noisy lossy compression problem by mapping pool observations to noisy symbol observations, data selection to compression, and learning to decoding.
This correspondence enables a unified information-theoretic analysis of data selection and learning in pool-based AL.
Applying finite blocklength analysis of noisy lossy compression, we derive information-theoretic lower bounds on label complexity and generalization error that serve as theoretical limits for a given learning algorithm under its associated optimal data selection strategy.
Specifically, our bounds include terms that reflect overfitting induced by the learning algorithm and the discrepancy between its inductive bias and the target task, and are closely related to established information-theoretic bounds and stability theory, which have not been previously applied to the analysis of pool-based AL.
These properties yield a new theoretical perspective on pool-based AL.

\end{abstract}

\section{Introduction}
\label{sec:intro}

Pool-based Active Learning (AL) selects instances from a finite unlabeled pool, queries their labels, and learns from the labeled data.
One objective of the pool-based AL theory is to characterize label complexity, defined as the minimum number of labels required for a target error.
Several frameworks analyze this based on the reduction of the hypothesis set through label observation, the complexity of the hypothesis set, and the properties of the data distribution; examples include the disagreement coefficient \cite{hanneke2007abound}, diameter-based AL \cite{tosh2017diameter}, and others \cite{kaariainen2006active,raginsky2011lower,zhang2014beyond,hanneke2015minimax,citovsky2021batch,gentile2022achieving,gentile2024fast,hanneke2025agnostic}.

Conversely, no established method evaluates theoretical limits for pool-based AL using properties of the learning algorithm, such as the degree of overfitting or inductive bias.
While such properties are central to passive learning frameworks like information-theoretic bounds (IT-bounds) \cite{xu2017information}, stability theory \cite{bousquet2002stability}, and PAC-Bayes \cite{mcallester1999pac}, which are essential for deep learning where analysis using the complexity of the hypothesis set fails \cite{zhang2017understanding,harutyunyan2021information,hardt2016train,dziugaite2017computing}, their extension to AL theory remains challenging.

In this paper, we focus on \cite{sugiyama2026finite} as a framework with the potential to resolve this issue.
This framework derives lower bounds on theoretical limits under an optimal training data sampling strategy, depending on properties of the learning algorithm such as the degree of overfitting and inductive bias.
By formulating the learning process as a lossy compression problem, this framework makes it possible to analyze training data construction and learning within the framework of finite blocklength analysis \cite{kostina2012fixed}. 
In this formulation, training data correspond to codewords, data sampling to encoding, and learning to decoding.
Within this framework, the derived lower bounds are closely related to quantities used to measure overfitting in IT-bounds and stability theory, unifying these perspectives by revealing how they emerge through a consideration of the sampling process.

However, although \cite{sugiyama2026finite} derives limits for optimal sampling, it cannot specifically analyze pool-based AL.
Since this framework assumes unconstrained sampling where instances are freely drawn from the input space, it cannot isolate specific sampling restrictions.
Consequently, the derived bounds become loose for pool-based AL, which is constrained to selection from a fixed pool, failing to capture its specific properties.
Thus, to focus on pool-based AL, we must extend the framework to incorporate this constraint.
Specifically, we need to model the data acquisition process not as arbitrary sampling, but as a composition of pool observation and data selection from the pool, while retaining the applicability of finite blocklength analysis.

To address this problem, we demonstrate that pool-based AL corresponds to noisy lossy compression \cite{kostina2016nonasymptotic}, and that the proposed extension is realized through its finite blocklength analysis.
Our first contribution lies in establishing this correspondence.
In noisy lossy compression, symbols are not observed directly; instead, a symbol passes through a {\it channel} to become a noisy symbol, which is subsequently encoded into a codeword.
This process can be viewed as decomposing the single encoding step of standard lossy compression into two distinct stages: the channel transmission and the encoding of the noisy symbol.
This decomposition aligns with the requirement to split the sampling strategy into pool observation and a data selection strategy.
Consequently, we map pool-based AL to noisy lossy compression as a process where a pool (noisy symbol) is obtained via observation (channel) according to the learning target distribution (source symbol), and training data (codeword) is acquired by selecting instances to label (encoding) from the pool.

Our second contribution is deriving lower bounds on label complexity and generalization error for pool-based AL with a given learning algorithm, leveraging the finite blocklength analysis of \cite{kostina2016nonasymptotic}.
Since their analysis fixes the channel to evaluate limits under optimal encoding, our application fixes the properties of pool (e.g., pool size) to derive limits achievable by optimal data selection strategy.
Our lower bounds reflect the properties of the pool and data selection, resulting in a representation specialized for pool-based AL.
To our knowledge, existing theories do not provide lower bounds on label complexity specific to pool-based AL with a finite pool; thus, establishing these limits via our lower bounds is a novelty of our work.
Furthermore, as a special case where the entire pool is labeled, we also cover i.i.d. sampling from the true distribution and derive the lower bound on its generalization error.

Our derived lower bound inherits advantages from \cite{sugiyama2026finite}, while remaining specialized for pool-based AL.
Unlike existing theories for pool-based AL that derive bounds based on the properties of the hypothesis set or distributional noise rates, our framework expresses these bounds via terms related to the degree of overfitting of the learning algorithm and its inductive bias mismatch.
This offers a novel perspective for describing the relationship between pool-based AL limits and properties of learning algorithms.
Moreover, the quantities in our bound relate to overfitting measures found in error upper bounds within IT-bounds and stability theory.
This extension thus bridges the theoretical gap between these theories and pool-based AL, demonstrating their potential applicability to AL theory.

\section{Preliminary}
\label{sec:preliminary}

\subsection{Machine Learning Process}
\label{subsec:machine_learning_process}

Let $\bm{X}$ and $Y$ be random variables representing instances and labels with domains $\mathcal{X}$ and $\mathcal{Y}$ and realizations $\bm{x}$ and $y$.
Let $P^*_{\bm{X}Y}(\bm{x},y)$ be the true distribution with support $\mathrm{supp}(P^{*}_{\bm{X}Y})$.
In the learning process, a sampling strategy obtain a training dataset $\{(\bm{x}_i,y_i)\}^n_{i=1}$ consisting of $n \in \mathbb{N}_+$ samples drawn from $P^*_{\bm{X}Y}(\bm{x},y)$. 
Subsequently, a learning algorithm produces a hypothesis $h:\mathcal{X} \rightarrow \mathcal{Y}$ using $\{(\bm{x}_i,y_i)\}^n_{i=1}$.
Let $H$ denote the random variable for the hypothesis and $\mathcal{H}$ be the hypothesis set. 
$h$ can also be denoted as $\widehat{P}^{h}_{Y|\bm{X}}$.
The learning objective is to obtain $h$ that minimizes a divergence $\mathcal{D}(P^*_{Y|\bm{X}} || \widehat{P}^{h}_{Y|\bm{X}})$ or an expected loss $\mathbb{E}_{P^*_{\bm{X}Y}}[l(h(\bm{X}),Y)]$, given a divergence $\mathcal{D}(\cdot || \cdot)$ or a loss function $l:\mathcal{Y}\times\mathcal{Y}\rightarrow \mathbb{R}_+$.

\subsection{Fixed-Length Lossy Compression}
\label{subsec:lossy_compression}

Fixed-length lossy compression \cite{berger1975rate} compresses a source symbol $A$, which follows a source distribution $P_A$, into one of $M \in \mathbb{N}_+$ codewords using an encoder $f$ and reconstructs it using a decoder $g$, allowing for reconstruction errors.
Here, the codelength, representing the degree of compression, is $\log M$ bits.
 Generally, block coding is adopted to compress a sequence of $k \in \mathbb{N}_+$ symbols $A^k=(A_1,\dots,A_k)$ collectively.
This framework analyzes the tradeoff between the rate $R= \log M /k$ and the distortion $\mathsf{d}(A^k;g(f(A^k)))$. 
While classical theory discusses the minimum rate in the limit of $k \rightarrow \infty$, finite blocklength analysis \cite{kostina2012fixed} analyzes the minimum rate for finite $k < \infty$.

\subsection{Learning Theory using Finite Blocklength Analysis of Lossy Compression}
\label{subsec:finite_blocklength_lc_ml}

\cite{sugiyama2026finite} maps the machine learning process to fixed-length lossy compression to analyze theoretical limits of generalization via the finite blocklength analysis of lossy compression \cite{kostina2012fixed}.

\textbf{Correspondence Between Machine Learning and Lossy Compression.}
Figure \ref{fig:correspondence_overview_extend} provides an overview.
They map codewords to the training dataset $\{(\bm{x}_i, y_i)\}^n_{i=1}$, while associating encoding (encoder) and decoding (decoder) with sampling (sampling strategy) and learning (learning algorithm), respectively.
To correspond to the codelength $\log M$ bits, assuming that a single sample $(\bm{x},y)$ is represented by $b \in \mathbb{N}_+$ bits, the total description length becomes $bn$ bits for the training dataset $\{(\bm{x}_i, y_i)\}^n_{i=1}$.
Next, to map the symbol to the distribution subject to sampling, an auxiliary variable $W \sim P_W$ with domain $\mathcal{W}$ and realization $w$, is introduced, and the input distribution $P^*_{\bm{X}}$ is decomposed as $P^*_{\bm{X}}(\bm{x})=\sum_{w \in\mathcal{W}}P^{*}_{\bm{X}|W}(\bm{x}|w)P_W(w)$.
As shown in Figure \ref{fig:correspondence_overview_extend}, this representation is always constructible via methods such as a lattice partition of $\mathrm{supp}(P^{*}_{\bm{X}})$, where $w$ indexes each region.
Introducing $W$ reformulates sampling from $P^*_{\bm{X}Y}$ as follows:
\begin{align}
    \underbracket[0.5pt]{w \sim P_W}_{\mathclap{\text{Determining } P^{*}_{\bm{X}Y|W=w}} } \rightarrow \underbracket[0.5pt]{\bm{x} \sim P^{*}_{\bm{X}|W=w} \rightarrow y \sim P^{*}_{Y|\bm{X}=\bm{x}}}_{\text{random sampling}}.
\label{eq:sampling_process_W}
\end{align}
This process entails two stages: determining the target $P^{*}_{\bm{X}Y|W}=P^{*}_{Y|\bm{X}}P^{*}_{\bm{X}|W}$ and sampling from it.
Generalizing the latter to arbitrary sampling, the symbol corresponds to $W$ (or $P^{*}_{\bm{X}Y|W}$), and the source distribution to $P_{W}$.
The distribution $P^{*}_{\bm{X}Y|W}$ is identified with $W$ and termed a \textit{sub-distribution} of $P^{*}_{\bm{X}Y}$.

Block coding corresponds to obtaining a total of $n$ training samples from $k \in \mathbb{N}_+$ sub-distributions $W^k \sim P_{W^k}=P_W\times\cdots\times P_W$ (i.e., $\{P^{*}_{\bm{X}Y|W=W_i}\}^k_{i=1}$) via an arbitrary allocation.
$k$ is termed the {\it sampling opportunities}, and the rate is defined as $R=b\frac{n}{k}$.
For a fixed $k$ and constant $b$, $R$ has a one-to-one correspondence with $n$, where $\frac{n}{k}$ represents the sampling efficiency (training data size per opportunity).
Distortion is mapped to the difference between $\widehat{P}^{*}_{Y|\bm{X}}$ and the hypothesis $P^{h}_{Y|\bm{X}}$ on $\mathrm{supp}(\{P^{*}_{\bm{X}|W=W_i}\}^k_{i=1})$.

\textbf{Notation.}
Let $T$ be the random variable for the training dataset with domain $\mathcal{T}$ and realization $t$ ($|t|=n$). 
Defining the sampling strategy as $\mathcal{S}:\mathcal{W}^k \rightarrow \mathcal{T}$ and the randomized learning algorithm as $\mathcal{A}:\mathcal{T} \rightarrow \mathcal{H}$, the machine learning process is described as $H=\mathcal{A}(\mathcal{S}(W^k))$.
The Markov kernels of $\mathcal{S}$ and $\mathcal{A}$ are denoted by $P^{\mathcal{S}}_{T|W^k}$ and $P^{\mathcal{A}}_{H|T}$, respectively.
The distortion measure can be defined as $\mathsf{d}(W;H) := \mathbb{E}_{P^*_{\bm{X}|W}}[\mathcal{D}(P^{*}_{Y|\bm{X}} || \hat{P}^{H}_{Y|\bm{X}})]$ for probabilistic models and $\mathsf{d}(W;H) := \mathbb{E}_{P^{*}_{Y|\bm{X}}P^*_{\bm{X}|W}}[l(Y,H(X))]$ for deterministic models.
For $k>1$, we define $\mathsf{d}(W^k;H) := \frac{1}{k} \sum^k_{i=1} \mathsf{d}(W_i;H)$. This quantity signifies the in-distribution generalization error, where the inaccessible $W$ is regarded as out-of-distribution.

\textbf{Overview of the Analysis.}
\cite{sugiyama2026finite} leverages the finite blocklength analysis \cite{kostina2012fixed} to determine the theoretical limits for a given $\mathcal{A}$.
Specifically, the study derives lower bounds on sample complexity and generalization error assuming an optimal sampling strategy $\mathcal{S}^*$ that minimizes the mutual information $\mathbb{I}(W;H)$ subject to the distortion constraint $\mathbb{E}[\mathsf{d}(W;\mathcal{A}(\mathcal{S}(W)))]\le d$.
Minimizing $\mathbb{I}(W;H)$ is equivalent to restricting information acquisition from $W$, thereby corresponding to the sampling strategy achieving the minimum training data size $n$.
The resulting bounds characterize the minimum $n$ to achieve a target $d$ and the minimum $d$ achievable with a fixed $n$.

\textbf{Problem.}
The previous lower bounds assume $\mathcal{S}^*$ can freely sample from $\mathrm{supp}(\{P^{*}_{\bm{X}Y|W=w_i}\}^k_{i=1})$ as there are no constraints other than $n$.
Therefore, they remain loose for pool-based AL and cannot capture the specific properties arising from finite pool constraints.
To address this issue, this paper focuses on noisy lossy compression \cite{kostina2016nonasymptotic} as a method to incorporate this constraint.

\subsection{Noisy Lossy Compression}
\label{subsec:finite_noisy_lossy_compression}

Noisy (fixed-length) lossy compression \cite{kostina2016nonasymptotic} considers a setting where a noisy symbol $\widebar{A} = c(A)$ is observed through a channel $c$, rather than the symbol $A$ itself.
Block coding assumes independent noise for each symbol, denoted as $\widebar{A}^k=(\widebar{A}_1, \dots, \widebar{A}_k)$ with $\widebar{A}_i = c(A_i)$.
The objective is for the encoder $f$ and decoder $g$ to minimize the distortion $\mathsf{d}(A^k;g(f(c(\widebar{A}^k))))$.

\subsection{Pool-based AL}
\label{subsec:pool-based_AL}

Pool-based AL \cite{settles2009active} assumes a pool of $m \in \mathbb{N}_+$ independent instances $\{\bm{x}'_j\}^m_{j=1}$ following the target distribution.
A training dataset is constructed by labeling $n \le m$ selected instances to derive a hypothesis.
The special case of $n=m$ is equivalent to i.i.d. sampling.
While selection and learning are practically iterative, we treat them as a sequential process without loss of generality.

\section{Proposed Theoretical Extension}
\label{sec:proposed_extension}

\begin{figure*}[ht]
    \centering
    \includegraphics[width=\linewidth]{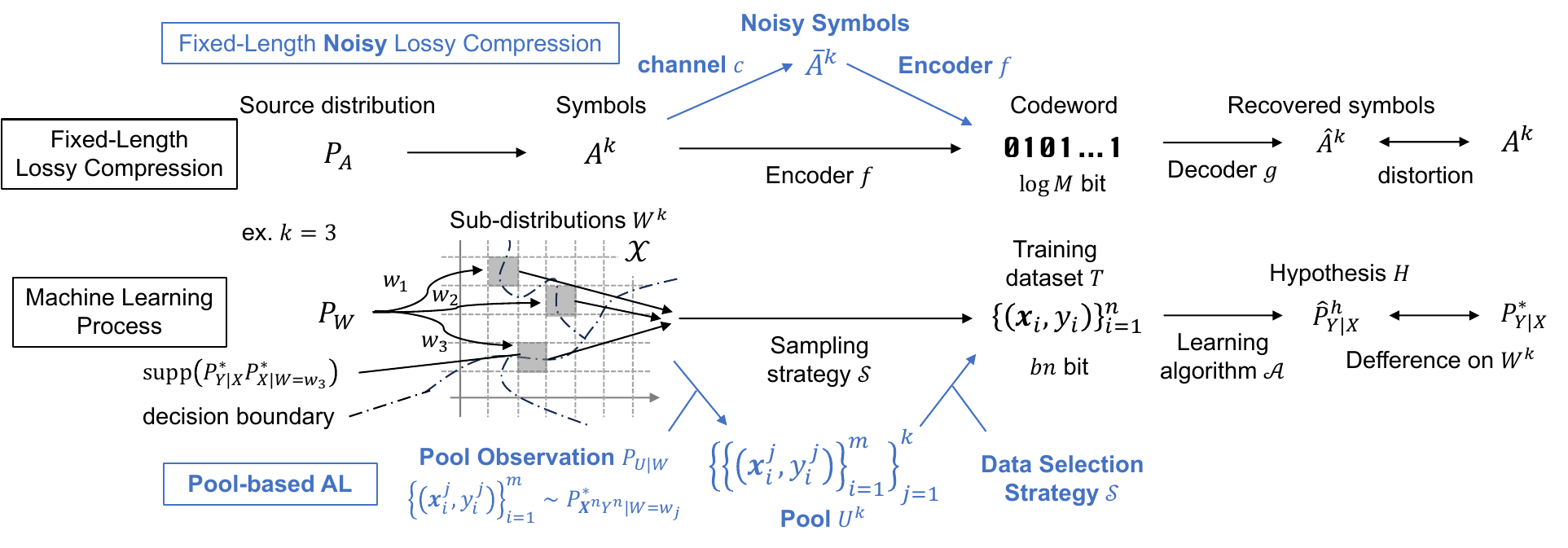}
    \caption{
             Overview of the correspondence in \cite{sugiyama2026finite} (black) and our extension (blue).
            }
    \label{fig:correspondence_overview_extend}
\end{figure*}

In this section, we establish a framework specialized for pool-based AL by extending \cite{sugiyama2026finite} via noisy lossy compression \cite{kostina2016nonasymptotic}.

\subsection{Idea of Extension}
\label{subsec:extension_idea}

We draw a parallel between the inability to directly observe $A^k$ in noisy lossy compression and the inability to freely sample from $\mathrm{supp}(\{P^{*}_{\bm{X}Y|W=w_i}\}^k_{i=1})$ in pool-based AL.
Standard (noiseless) lossy compression models the codeword generation from $A^k$ solely via the encoder $f$. 
Similarly, \cite{sugiyama2026finite} represented the acquisition of the training dataset $T$ from $W^k$ using the sampling strategy $\mathcal{S}$ alone.
In contrast, noisy lossy compression can be interpreted as decomposing this process into two stages: a fixed channel $c$ and the encoder $f$, which is the target of analysis.
Therefore, extending the framework of \cite{sugiyama2026finite} with noisy lossy compression allows us to incorporate the constraint by fixing the initial phase of the $W^k \to T$ process as the channel $c$.
Consequently, we can treat only the part corresponding to the encoder $f$ as the analysis target.

Focusing on training data construction in pool-based AL, this process is represented as two stages: observing a pool from the target distribution and subsequent data selection.
Since the former involves sampling according to the distribution, it is an operation that should remain fixed rather than being the subject of analysis; whereas the latter, data selection, is the main focus of pool-based AL analysis.
Therefore, by mapping pool observation to channel $c$ and data selection to encoder $f$, we formulate pool-based AL as noisy lossy compression, enabling a finite blocklength analysis centered on the selection process.

\subsection{Correspondence Between Pool-Based AL and Noisy Lossy Compression}
\label{subsec:pbal_nlc_correspondence}

Based on this idea, we specifically map pool-based AL to noisy lossy compression.
Figure \ref{fig:correspondence_overview_extend} illustrates the overview.
We adopt the mapping from \cite{sugiyama2026finite} excluding the encoding, and newly map the channel and encoding stages.
We first consider the case of $k=1$.

\textbf{Channel.}
The channel $c$ degrades the symbol $A$ into a noisy symbol $\widebar{A}$.
We map this channel to the observation of a pool $\{(\bm{x}'_j, y'_j)\}^m_{j=1}$ following the sub-distribution $W$ (i.e., $\{(\bm{x}'_j, y'_j)\}^m_{j=1} \sim P^{*}_{\bm{X}^mY^m|W} = P^{*}_{\bm{X}Y|W} \times \cdots \times P^{*}_{\bm{X}Y|W}$).
We then associate the noisy symbol with the pool $\{(\bm{x}'_j, y'_j)\}^m_{j=1}$.
Although the pool is originally unlabeled, we include labels $\{y'_j\}^m_{j=1}$ for convenience, assuming $y'_j$ is observed only if $\bm{x}'_j$ is selected as a training sample. 
This notational treatment entails no loss of generality.

\textbf{Encoding.}
Encoding constructs a codeword from noisy symbols $\widebar{A}^k$ to reconstruct the symbol $A$ with small distortion.
This corresponds to acquiring a training dataset from the pool to derive a low-error hypothesis for $P^{*}_{Y|\bm{X}}$ on $\mathrm{supp}(P^{*}_{\bm{X}|W})$.
Specifically, we map encoding to the selection of $n \le m$ samples from the pool of size $m$.
We refer to this function as the \textit{data selection strategy}.

\textbf{Block Coding.}
Next, we consider the case of $k > 1$.
Due to the independence of the channel, pool observation consists of observing $m$ samples from each sub-distribution $w_i$, resulting in a total of $km$ samples, from which the data selection strategy selects $n (\le km)$ samples.
When $m=1$, this results in an independent pool observation following $P^*_{\bm{X}Y}$, given by $\{(\bm{x}'_i, y'_i)\}^k_{i=1} \sim P^*_{\bm{X}^kY^k} = P^*_{\bm{X}Y} \times \dots \times P^*_{\bm{X}Y}$.
Furthermore, if we set $n=k$, the pool observation and data selection correspond to the construction of $T$ via i.i.d. sampling from $P^*_{\bm{X}Y}$.
Since the selection efficiency $\frac{n}{mk}$ is expressed as $R\frac{1}{bm}$ using the rate $R=b\frac{n}{k}$, we treat $R$ as synonymous with selection efficiency as well as with $n$.

\subsection{Additional Notation}
\label{subsec:nlc_additional_notation}

Notation excluding the sampling strategy follows Section \ref{subsec:finite_blocklength_lc_ml}.
We also assume the distortion measure $\mathsf{d}(\cdot;\cdot)$ is bounded (Assumption (I))
First, setting $k=1$, we define the following.
Let $U$ be the random variable for a pool of $m$ samples, $\mathcal{U}(W)$ be the family of $m$ samples that can be generated from $W (\equiv P^*_{\bm{X}Y|W})$, and $u \in \mathcal{U}(W)$ be the realization of $U$ from $W$.
We define the pool observation process as $P_{U|W} ~(= P^*_{X^m Y^m | W} = P^*_{XY|W} \times \cdots \times P^*_{XY|W})$, satisfying $\mathrm{supp}(P_{U|W}) \subseteq \mathcal{U}(W)$.
For $k>1$, we write the independent pool sequence as $U^k=(U_1,\dots, U_k) \sim P_{U^k|W^k} = \prod^k_{i=1}P_{U_i|W_i}$.
Let the data selection strategy be $\mathcal{S}:\prod^k_{i=1} \mathcal{U}(W_i) \rightarrow \mathcal{T}(U^k)$ with the Markov kernel $P^{\mathcal{S}}_{T|U^k}$ (where $\mathrm{supp}(P^{\mathcal{S}}_{T|U^k})=\mathcal{T}(U^k)$).
Here, the range $\mathcal{T}(U^k)$ is a family of subsets within the power set $\mathfrak{P}(U^k)$ of the $km$ samples in $U^k$; depending on whether $n$ is specified, it is $\mathcal{T}(U^k)=\mathfrak{P}_n(U^k)$ (family of sets of size $n$) or $\mathfrak{P}(U^k)$.
Thus, the overall pool-based AL process is described as $P^{\mathcal{A}}_{H|T} P^{\mathcal{S}}_{T|U^k}P_{U^k|W^k}$.

\section{Theoretical Analysis}
\label{sec:nlc_analysis}

Based on the correspondence in Section \ref{sec:proposed_extension}, we apply the finite blocklength analysis of noisy lossy compression \cite{kostina2016nonasymptotic} to pool-based AL.
Specifically, we derive lower bounds on label complexity and generalization error when using an optimal data selection strategy for a fixed randomized learning algorithm $\mathcal{A}$.

\subsection{Definition of Key Concepts}
\label{subsec:nlc_def_key_concepts}

First, we define the \textit{``good'' encoding (data selection strategy)} as our analysis target, along with the \textit{rate-distortion function} and \textit{tilted information} used for deriving lower bounds.
We adopt definitions from \cite{sugiyama2026finite}, while following \cite{kostina2016nonasymptotic} for differences arising from the extension.
Details are in Appendix \ref{apdx:sec:nlc_detail_interpret}.

\textbf{``Good'' Data Selection Strategy.}
For $k < \infty$, we define a class of ``good'' data selection strategy achieving small distortion with high probability via $\mathcal{A}$, along with the minimum training data size and minimum rate within this class
This class forms the subject of finite blocklength analysis.
\begin{definition}
  Given sampling opportunities $k$, a $(k, n, m, d, \epsilon, \mathcal{A})$ selection in $\{\mathcal{W}^k, \mathcal{H}, P_{W^k}, P_{U^k|W^k}, \mathsf{d}^k:\mathcal{W}^k\times\mathcal{H} \rightarrow [0,+\infty]\}$ is an $\mathcal{S}$ satisfying $m = |U_i|, \forall i \in \{1,\dots,k\}$ and $|\mathcal{S}(U^k)|=n$, subject to $\mathbb{P}[\mathsf{d}(W^k; \mathcal{A}(\mathcal{S}(U^k))) > d] \le \epsilon$ for distortion level $d$, excess distortion probability $\epsilon$, and algorithm $\mathcal{A}$.
  For $k=1$, it is denoted as an $(n, m, d, \epsilon, \mathcal{A})$ selection.
  The minimum training data size is defined as $n^*(k, m, d, \epsilon, \mathcal{A}) := \min\{ n: \exists (k, n, m, d, \epsilon, \mathcal{A}) \text{ selection} \}$, and the minimum rate as $R(k, m, d,\epsilon, \mathcal{A}) := \frac{b}{k}n^*(k,m, d,\epsilon, \mathcal{A})$.
  $n^*(k, m, d, \epsilon, \mathcal{A})$ is the minimum number of labels satisfying this condition, corresponding to label complexity in pool-based AL.
\label{def:nlc_k_n_m_d_e_A_selection}
\end{definition}

\textbf{Rate-Distortion Function $R(m,d,\mathcal{A})$.}
$R(m,d,\mathcal{A})$ denotes the minimum rate as $k \rightarrow \infty$, defined following \cite{kostina2012fixed}.
We first define the $\langle k, n, m, d, \mathcal{A} \rangle$ selection by substituting the condition in Definition \ref{def:nlc_k_n_m_d_e_A_selection} with $\mathbb{E}_{P_{U^k|W^k}P_{W^k}}[\mathsf{d}(W^k;\mathcal{A}(\mathcal{S}(U^k)))] \le d$, and let $R(k, m, d, \mathcal{A})$ denote its minimum rate.
$R(m,d,\mathcal{A})$ is then defined as:
\begin{align}
    \textstyle R(m,d, \mathcal{A}) := \limsup_{k\rightarrow\infty}R(k,m,d,\mathcal{A}).
\end{align}
Since this definition is based on \cite{kostina2012fixed}, it possesses the same properties as the original rate-distortion function.
Specifically, $R(m,d,\mathcal{A})$ is non-increasing and convex with respect to $d$ \cite{cover1999elements}.
From the i.i.d. property of $W$ and Assumption (I), for any $\epsilon \in (0,1)$, the equality $R(d,\mathcal{A}) = \limsup_{k\rightarrow \infty}R(k,m,d,\epsilon,\mathcal{A})$ holds, and $R(m,d,\mathcal{A})$ is equivalent to the following expression \cite{cover1999elements,kostina2016nonasymptotic}:
\begin{align}
\begin{split}
  &\textstyle \mathbb{R}_W(m,d,\mathcal{A}) \\
  &\textstyle:= \inf_{\mathcal{S} \in \mathfrak{S}: \mathbb{E}_{P_{U|W}P_W}[\bar{\mathsf{d}}(U;\mathcal{A}(\mathcal{S}(U)))] \le d}  \mathbb{I}(U;H) \\
  &\textstyle= \inf_{\mathcal{S}\in \mathfrak{S}: \mathbb{E}_{P^{\mathcal{A}}_{H|T}P^{\mathcal{S}}_{T|U}P_{U|W} P_W } [\bar{\mathsf{d}}(U;H)] \le d}  \mathbb{I}(U;H),
\end{split}
\label{eq:nlc_R_W}
\end{align}
Here, let $\mathfrak{S}:=\{\mathcal{S}:\mathrm{supp}(P^{\mathcal{S}}_{T|U}) \subseteq \mathcal{T}(U)\}$, and $\bar{\mathsf{d}}(u;h) = \mathbb{E}_{P_{W|U}}[\mathsf{d}(W;h) | U=u]$ denotes the expectation over the posterior $P_{W|U}$.
If regions on $\mathcal{X}$ partitioned by $W$ are disjoint and $w$ is uniquely determined by $u$, then $\bar{\mathsf{d}}(u;h) = \mathsf{d}(w;h)$.
Distinctions from \cite{sugiyama2026finite} include the dependence on $m$ and Eq. \eqref{eq:nlc_R_W}; whereas they define the corresponding quantity using $\mathbb{I}(W;H)$, we employ $\mathbb{I}(U;H)$ to explicitly reflect the properties of the pool.

To rule out exceptional $d$, we assume $d_{\mathrm{min}} := \inf\{d: \mathbb{R}_W(m,d,\mathcal{A}) < \infty \}$ exists and $d_{\mathrm{min}} < \infty$ (Assumption (\Two)).
To apply \cite{kostina2016nonasymptotic}, we further assume an optimal data selection strategy $\mathcal{S}^*$ exists that attains the infimum in Eq. \eqref{eq:nlc_R_W} exactly as $\mathbb{E}[\bar{\mathsf{d}}(U;\mathcal{A}(\mathcal{S^*}(U)))] = d$ (Assumption (\Three)).
This corresponds to selecting a minimal subset from the pool with average distortion $d$, plausible for sufficiently large $m$ and quantized $d$.

\textbf{Tilted Information.}
Tilted information is a quantity whose expectation equals the rate-distortion function.
In machine learning \cite{sugiyama2026finite}, it corresponds to the training data size ($\times b$) required to derive $h$ from $w$ via $\mathcal{A}$ and the optimal sampling strategy subject to average distortion $\le d$.
In information theory, it is interpreted as the minimum per-symbol codelength under the same constraint.
In this extension, following Eq. (9) of \cite{kostina2016nonasymptotic}, we define it such that its expectation equals $R(m,d,\mathcal{A})$:
\begin{definition}
    For any $\mathcal{A}$, $d > d_{\mathrm{min}}$, and $m$, we define the $\mathcal{A}$-specified $d$-tilted information at $(w, u, h)$ as follows:
    \begin{align}
    \begin{split}
        &\tilde{\jmath}_{W;U}(w,u,h,m,d,\mathcal{A})  \\
        &~~~~:= \iota_{U;H^{\mathcal{S}^*}_{\mathcal{A}}}(u;h) + \lambda^*_{\mathcal{A}}(d)(\mathsf{d}(w;h) - d),
    \end{split}
        \label{eq:ML_nlc_dfn_d-tilted}
    \end{align}
    where, $\iota_{U;H^{\mathcal{S}^*}_{\mathcal{A}}}(u;h):=\log \frac{\mathrm{d}P_{H^{\mathcal{S}^*}_{\mathcal{A}}U}}{\mathrm{d}P_{H^{\mathcal{S}^*}_{\mathcal{A}}} \mathrm{d}P_U}(u,h)$, $\lambda^{*}_{\mathcal{A}}(d) := -\frac{\mathrm{d} \mathbb{R}_W(m,d,\mathcal{A})}{\mathrm{d}d}> 0$, and $H^{\mathcal{S}^*}_{\mathcal{A}} = \mathcal{A}(\mathcal{S}^*(U^k))$.
\label{def:ML_nlc_dfn_d-tilted}
\end{definition}
The definition of $\iota_{U;H^{\mathcal{S}^*}_{\mathcal{A}}}(u;h)$ implies that $\mathbb{E}[\iota_{U;H^{\mathcal{S}^*}_{\mathcal{A}}}(U;H)]  = \mathbb{I}(U;H^{\mathcal{S}^*}_{\mathcal{A}}) = \mathbb{R}_W(m,d,\mathcal{A})$. 
Furthermore, since $\mathbb{E}_{P^{\mathcal{A}}_{H|T}P^{\mathcal{S}}_{T|U}P_{U|W} P_W }[\mathsf{d}(W;H)] = d$ is satisfied by the definition of $\mathcal{S}^{*}$, we obtain the relation $\mathbb{E}_{P^{\mathcal{A}}_{H|T}P^{\mathcal{S}}_{T|U}P_{U|W} P_W }[\tilde{\jmath}_{W;U}(W,U,H,m,d,\mathcal{A})] = \mathbb{R}_W(m,d,\mathcal{A})$.
To satisfy this requirement, while the tilted information in \cite{sugiyama2026finite} is defined using $\iota_{W;H}(w;h)$, this extension defines it using $\iota_{U;H^{\mathcal{S}^{*}}_{\mathcal{A}}}(u;h)$ instead. %due to the difference in the representation of $\mathbb{R}_W$.
Consequently, $\tilde{\jmath}_{W;U}$ depends on the pool $U$ and the data selection strategy $\mathcal{S}^{*}$, enabling the subsequent analysis using $\tilde{\jmath}_{W;U}$ to reflect concepts specific to pool-based AL.

\textbf{Interpretation of $\tilde{\jmath}_{W;U}$:} 
Since the definition of tilted information differs from that in \cite{sugiyama2026finite}, its interpretation changes accordingly.
In this extension, $\tilde{\jmath}_{W;U}(w,u,h,m,d,\mathcal{A})$ can be interpreted as the minimum number of selected samples ($\times b$ bits) required to obtain $h$ from the pool $u$ derived from $w$, using $\mathcal{S}^*$ and $\mathcal{A}$, while accounting for the constraint $\mathbb{E}[\mathsf{d}(W;\mathcal{A}(\mathcal{S}(U))] \le d$, for the following reason.
First, the $\iota_{U;H^{\mathcal{S}^*}_{\mathcal{A}}}(u,h)$, with expectation $\mathbb{I}(U;H^{\mathcal{S}^*}_{\mathcal{A}})=\mathbb{R}_W(m,d,\mathcal{A})$, corresponds to the per-$(u,h)$ minimum rate.
That is, it can be interpreted as the minimum training data size (number of selected samples) $\times b$ bits required to obtain $h$ from $u$ using $\mathcal{S}^*$ and $\mathcal{A}$.
On the other hand, $\lambda^*_{\mathcal{A}}(d)(\mathsf{d}(w;h) - d)$ represents a correction term according to the required distortion level $d$ \cite{sugiyama2026finite}.
This term is positive when $\mathsf{d}(w;h)>d$, increasing $\tilde{\jmath}_{W;U}$, and negative when $\mathsf{d}(w;h)<d$, decreasing it.
In other words, since additional data is required when $h$ fails to meet $d$, this term makes $\tilde{\jmath}_{W;U}$ larger than $\iota_{U;H^{\mathcal{S}^*}_{\mathcal{A}}}$ (and vice versa).
Thus, $\tilde{\jmath}_{W;U}$ represents the required minimum selected data size ($\times b$) ``considering $d$''.

\subsection{Lower Bound of Label Complexity}
\label{subsec:nlc_label_comp_bound}

In this section, we apply the finite blocklength analysis of \cite{kostina2016nonasymptotic} to analyze label complexity. 
Specifically, we show that given pool size $m$ and learning algorithm $\mathcal{A}$, the labeled data size cannot be further reduced to achieve a distortion $d$ for any data selection strategy.
As a preparation, we first present the following theorem corresponding to the essential Theorem 1 in \cite{kostina2016nonasymptotic}.
This theorem demonstrates a fundamental limit: for a required $d$ and fixed $m$ and $n$, the excess distortion probability $\epsilon$ cannot be improved upon, regardless of the data selection strategy.
See Appendix~\ref{apdx:subsec:proof_thm:ML_nlc_eps_conv_bound} for the proof.

\begin{theorem}
    The following holds for any $(n,m,d,\epsilon, \mathcal{A})$ selection:
    \begin{align}
        \epsilon \ge \sup_{\gamma \ge 0} \{ \mathbb{P}[\tilde{\jmath}_{W;U}(W,U,H,m,d,\mathcal{A}) \ge bn + \gamma] - e^{-\gamma}  \}.
        \label{eq:ML_nlc_eps_conv_bound}
    \end{align}
\label{thm:ML_nlc_eps_conv_bound}  
\end{theorem}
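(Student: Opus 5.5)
The plan is to transplant the converse argument of Theorem~1 in \cite{kostina2016nonasymptotic} (itself a noisy version of Theorem~7 in \cite{kostina2012fixed}) to the present notation, identifying the source with $W$, the noisy observation with the pool $U$, the codeword with the selected set $T=\mathcal{S}(U)$, and the decoder with $\mathcal{A}$. Fix an arbitrary $(n,m,d,\epsilon,\mathcal{A})$ selection $\mathcal{S}$ and let $H=\mathcal{A}(\mathcal{S}(U))$. Since $|\mathcal{S}(U)|=n$ and each sample is described by $b$ bits, the selected dataset takes at most $M=2^{bn}$ values; writing logarithms consistently, we treat $\log M = bn$. The Markov chain $W\to U\to T\to H$ will be used throughout.

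\textbf{Step 1 (decomposition of the event).} For any $\gamma\ge 0$, split
\begin{align*}
\mathbb{P}[\tilde{\jmath}_{W;U}(W,U,H)\ge bn+\gamma]
\le \mathbb{P}[\mathsf{d}(W;H)>d] + \mathbb{P}[\tilde{\jmath}_{W;U}(W,U,H)\ge bn+\gamma,\ \mathsf{d}(W;H)\le d].
\end{align*}
The first term is at most $\epsilon$ by the definition of a selection. On the second event, since $\lambda^*_{\mathcal{A}}(d)>0$, the inequality $\tilde{\jmath}_{W;U}\ge bn+\gamma$ together with $\mathsf{d}(W;H)\le d$ forces $\exp(\tilde{\jmath}_{W;U})\,\exp(-bn-\gamma)\ge 1$. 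Hence the second term is bounded by $e^{-\gamma}M^{-1}\,\mathbb{E}[\exp(\tilde{\jmath}_{W;U}(W,U,H))\,\mathbbm{1}\{\mathsf{d}(W;H)\le d\}]$, and more crudely by dropping the indicator after using $\exp(\lambda^*(\mathsf{d}-d))\le 1$ on it.

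\textbf{Step 2 (key inequality).} We need to show that
\begin{align*}
\mathbb{E}\big[\exp(\iota_{U;H^{\mathcal{S}^*}_{\mathcal{A}}}(U;H))\,\mathbbm{1}\{\mathsf{d}(W;H)\le d\}\big]\le M .
\end{align*}
The approach is to write $H$ as a function (through $\mathcal{A}$) of one of $M$ possible codewords $t$. The expectation is then bounded by $\sum_{t}\mathbb{E}_{P_U}[\exp(\iota(U;h_t))\,\cdots]$, where for each fixed $h$ we invoke the Csisz\'ar-type property of the optimal (tilted) kernel:
\begin{align*}
\mathbb{E}_{P_{WU}}\big[\exp\big(\iota_{U;H^{\mathcal{S}^*}_{\mathcal{A}}}(U;h)+\lambda^*(\mathsf{d}(W;h)-d)\big)\big]\le 1 \quad \text{for all } h.
\end{align*}
This is the analogue of Eq.~(11)/(12) in \cite{kostina2016nonasymptotic}. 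It follows from Assumptions (\Two), (\Three) and the KKT characterization of the minimizer in Eq.~\eqref{eq:nlc_R_W}, with $\bar{\mathsf{d}}(u;h)$ converted to $\mathsf{d}(w;h)$ by conditioning on $U$ and Jensen's inequality. Summing over the at most $M$ codewords, and handling the randomness of $\mathcal{A}$ by conditioning on its internal randomness, gives the bound $M$. Combining with Step~1 yields $\mathbb{P}[\tilde{\jmath}\ge bn+\gamma]\le\epsilon+e^{-\gamma}$, and taking the supremum over $\gamma$ gives Eq.~\eqref{eq:ML_nlc_eps_conv_bound}.

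\textbf{Main obstacle.} The hardest step is justifying the per-$h$ inequality in Step~2 in this learning setting. The optimization in Eq.~\eqref{eq:nlc_R_W} ranges only over selections $\mathcal{S}\in\mathfrak{S}$ composed with a fixed $\mathcal{A}$, not over all kernels $P_{H|U}$, so the standard variational argument behind the tilted-information identity must be checked for this restricted class. My first attempt will follow \cite{sugiyama2026finite}: treat the set of induced kernels $P^{\mathcal{A}}_{H|T}P^{\mathcal{S}}_{T|U}$ as convex, which requires allowing randomized selections, and apply the Lagrangian perturbation argument of \cite{kostina2016nonasymptotic} within that convex set. The inequality is needed only for those $h$ reachable by $\mathcal{A}$, which is all the converse uses.
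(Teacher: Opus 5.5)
Your outline matches the paper's: split on $\{\mathsf{d}(W;H)>d\}$, turn the indicator into an exponential, sum over the at most $2^{bn}$ candidate sets using $P^{\mathcal{S}}_{T|U}(t|u)\le 1$, and bound each set's contribution by $1$. The gap is the per-$h$ inequality you make the crux of Step~2. It is false, and it would not close the argument even if it held.

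Its left-hand side is $\mathbb{E}_{P_{WU}}[\exp(\tilde{\jmath}_{W;U}(W,U,h,m,d,\mathcal{A}))]$, so the tilt enters with a plus sign. Average it over $h\sim P_{H^{\mathcal{S}^*}_{\mathcal{A}}}$, use $P_U(u)\exp(\iota_{U;H^{\mathcal{S}^*}_{\mathcal{A}}}(u;h))=P_{U|H^{\mathcal{S}^*}_{\mathcal{A}}}(u|h)$, and use the Markov chain $W\to U\to H$. This gives exactly $\mathbb{E}[\exp(\lambda^*_{\mathcal{A}}(d)(\mathsf{d}(W;H)-d))]$ under the optimal joint law. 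By Jensen and $\mathbb{E}[\mathsf{d}(W;H)]=d$ (Assumption (III)), this is $\ge 1$, strictly unless $\mathsf{d}(W;H)=d$ almost surely. So your inequality fails for some $h$ in every nondegenerate case.

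Separately, on $\{\mathsf{d}(W;h)\le d\}$ the tilted integrand is smaller than $\exp(\iota_{U;H^{\mathcal{S}^*}_{\mathcal{A}}}(U;h))$. An upper bound on it therefore cannot control $\mathbb{E}[\exp(\iota)\mathbbm{1}\{\mathsf{d}\le d\}]$. Your Jensen step converting $\bar{\mathsf{d}}$ into $\mathsf{d}$ also runs the wrong way, since $\mathbb{E}[e^{c\,\mathsf{d}(W;h)}\mid U]\ge e^{c\,\bar{\mathsf{d}}(U;h)}$ for either sign of $c$.

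The missing observation is that nothing variational is needed. $\tilde{\jmath}_{W;U}$ is built from the information density and the true distortion $\mathsf{d}(w;h)$. Hence the bound $\mathbbm{1}\{\mathsf{d}(w;h)\le d\}\le \exp(\lambda^*_{\mathcal{A}}(d)(d-\mathsf{d}(w;h)))$ cancels the tilt exactly and pointwise in $w$, leaving $\exp(\iota_{U;H^{\mathcal{S}^*}_{\mathcal{A}}}(u;h))$. You already did this at the end of Step~1.

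Then for each $h$, $\mathbb{E}_{P_U}[\exp(\iota_{U;H^{\mathcal{S}^*}_{\mathcal{A}}}(U;h))]=\sum_u P_{U|H^{\mathcal{S}^*}_{\mathcal{A}}}(u|h)\le 1$. This is a change-of-measure identity valid for any joint law. Averaging over $h\sim P^{\mathcal{A}}_{H|T=t}$ and summing the weights $2^{-bn}$ over the $2^{bn}$ sets $t$ bounds the second term by $e^{-\gamma}$, which is exactly the paper's chain of equalities.

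No KKT condition, no convexity of the class of induced kernels, and no optimality of $\mathcal{S}^*$ enter this theorem. Optimality is used only later, to make the mean of $\tilde{\jmath}_{W;U}$ equal $\mathbb{R}_W(m,d,\mathcal{A})$. So the ``main obstacle'' you flag does not arise, and pursuing it as stated would have you trying to prove a false inequality.
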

This theorem shows that the limit of $\epsilon$ in pool-based AL can be assessed by the probability that the labeled data size $n (\times b)$ drops below $\tilde{\jmath}_{W;U}$.
This result confirms the validity of interpreting $\tilde{\jmath}_{W;U}$ as the minimum selected data size.

As a final step, following \cite{kostina2012fixed}, we assume:
(\Four) $d \in (d_{\mathrm{min}}, d_{\mathrm{max}})$, where $d_{\mathrm{max}} := \sup \{d : \mathbb{R}_W(m,d,\mathcal{A})>0 \}$, and $\epsilon \in (0,1)$.
(V) $\tilde{\jmath}_{W;U}(w,u,h,m,d,\mathcal{A})$ has a finite absolute third moment.

Using Theorem \ref{thm:ML_nlc_eps_conv_bound}, we derive a lower bound for the minimum rate (or label complexity) $R(k,m,d,\epsilon,\mathcal{A})$ ($\propto n^*(k,m,d,\epsilon,\mathcal{A})$) as follows.
The following theorem indicates that for a specified pool size $m \times k$ and $\mathcal{A}$, the selected data size cannot be further decreased to satisfy $\mathbb{P}[\mathsf{d}(W^k; \mathcal{A}(\mathcal{S}(U^k))) > d] \le \epsilon$, irrespective of the data selection strategy.
See Appendix~\ref{apdx:subsec:proof_thm:ML_nlc_rate_conv_bound} for the proof.

\begin{theorem}
   Under Assumptions (I)--(V), for any $k$, $m$, $d \in (d_{\mathrm{min}},d_{\mathrm{max}})$, $\epsilon \in (0,1)$, $\mathcal{A}$, the following holds:
    \begin{align}
    \begin{split}
    &\textstyle R(k,m, d,\epsilon, \mathcal{A}) \\
    &\textstyle\ge R(m,d,\mathcal{A}) + \sqrt{\frac{V(m,d,\mathcal{A})}{k}}Q^{-1}(\epsilon) + O\big(\frac{\log k}{k}\big).
    \end{split}
    \label{eq:ML_nlc_rate_conv_bound}
  \end{align}
  Here, $Q(\epsilon)$ is the complementary cumulative distribution function of the standard normal distribution, and $Q^{-1}(\epsilon)$ is monotonically decreasing with respect to $\epsilon$.
  Furthermore, $V(m,d,\mathcal{A}) := \mathrm{Var}_{P^{\mathcal{A},\mathcal{S}^*}_{H|U} P_{U|W} P_W}(\tilde{\jmath}_{W;U}(W,U,H,m,d,\mathcal{A}))$, which is referred to as the rate-dispersion function.
\label{thm:ML_nlc_rate_conv_bound}
\end{theorem}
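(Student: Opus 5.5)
The plan follows the converse argument of Theorem 4 in \cite{kostina2012fixed} and Theorem 2 in \cite{kostina2016nonasymptotic}. It has three steps: apply Theorem~\ref{thm:ML_nlc_eps_conv_bound} to the block problem, tensorize the tilted information, and invert a Berry--Esseen estimate.

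\textbf{Step 1: apply Theorem~\ref{thm:ML_nlc_eps_conv_bound} to the block of length $k$.} Regard a $(k,n,m,d,\epsilon,\mathcal{A})$ selection as an $(n,m,d,\epsilon,\mathcal{A})$ selection for the super-symbol $W^k$ observed through the product channel $P_{U^k|W^k}$. The distortion is $\mathsf{d}(W^k;H)=\frac1k\sum_i\mathsf{d}(W_i;H)$.

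\textbf{Step 2: tensorize the tilted information.} The key point is that the $d$-tilted information of the $k$-fold problem is additive:
\begin{align}
\tilde{\jmath}_{W^k;U^k}(w^k,u^k,h^k)=\sum_{i=1}^k \tilde{\jmath}_{W;U}(w_i,u_i,h_i,m,d,\mathcal{A}).
\end{align}
I would justify this as follows. With i.i.d.\ $(W_i,U_i)$ and separable distortion, the block rate-distortion function equals $k\,\mathbb{R}_W(m,d,\mathcal{A})$. The slope $\lambda^*_{\mathcal{A}}(d)$ is unchanged. The product of $k$ copies of $\mathcal{S}^*$ attains the block infimum, so the information density factorizes.

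One subtlety is that Theorem~\ref{thm:ML_nlc_eps_conv_bound} involves the actual output $H$ rather than $H^{\mathcal{S}^*}_{\mathcal{A}}$. I would handle this as in \cite{kostina2016nonasymptotic}. Their converse uses the characterization $\mathbb{E}[\exp(\tilde{\jmath}(W,U,h)-\lambda^*(\mathsf{d}(W;h)-d)\dots)]\le 1$ for every fixed $h$. It therefore suffices to bound the block tilted information evaluated along the product optimizer. Under this reduction the relevant random variable is a sum of $k$ i.i.d.\ terms with mean $\mathbb{R}_W(m,d,\mathcal{A})=R(m,d,\mathcal{A})$ and variance $V(m,d,\mathcal{A})$.

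\textbf{Step 3: Berry--Esseen and inversion.} Let $T_3$ denote the third-moment constant from Assumption (V), and set $\gamma=\frac12\log k$. Theorem~\ref{thm:ML_nlc_eps_conv_bound} gives
\begin{align}
\epsilon\ge \mathbb{P}\Big[\textstyle\sum_{i=1}^k\tilde{\jmath}_{W;U}(W_i,U_i,H_i)\ge bn+\gamma\Big]-\frac{1}{\sqrt{k}}.
\end{align}
Berry--Esseen bounds the probability from below by
\begin{align}
Q\Big(\frac{bn+\gamma-kR(m,d,\mathcal{A})}{\sqrt{kV}}\Big)-\frac{6T_3}{V^{3/2}\sqrt k}.
\end{align}
Solving for $bn$ yields
\begin{align}
bn\ge kR(m,d,\mathcal{A})+\sqrt{kV}\,Q^{-1}\Big(\epsilon+\tfrac{c}{\sqrt k}\Big)-\tfrac12\log k.
\end{align}
A Taylor expansion of $Q^{-1}$ around $\epsilon$ absorbs the $c/\sqrt k$ perturbation into $O(\log k)$. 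Dividing by $k$ and using $R(k,m,d,\epsilon,\mathcal{A})=\frac bk n^*$ gives \eqref{eq:ML_nlc_rate_conv_bound}.

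The case $V=0$ needs separate treatment. There the sum is almost surely constant, and the bound follows with $\gamma=\log k$ up to an $O(\log k/k)$ term.

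\textbf{Main obstacle.} Step 2 is the delicate part. Two things must be checked: that the block $d$-tilted information is additive when the encoder only sees $U^k$ (the noisy setting), and that the single-letter optimizer $\mathcal{S}^*$, assumed in (III), induces the product optimizer. I would first try the noisy-to-surrogate reduction of \cite{kostina2016nonasymptotic}. That reduction rewrites the problem as a noiseless problem in $U$ with distortion $\bar{\mathsf{d}}$. For it I would use the convexity and differentiability of $\mathbb{R}_W$ guaranteed by Assumptions (II) and (IV), and the Lagrangian duality argument, to show that the optimizer and $\lambda^*$ tensorize.
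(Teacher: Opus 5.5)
Your proposal follows essentially the same route as the paper's proof. You apply Theorem~\ref{thm:ML_nlc_eps_conv_bound} to the length-$k$ block, get additivity of the block tilted information from the i.i.d.\ structure, the product channel and Assumption~(\Three), then use Berry--Esseen with $\gamma=\frac{1}{2}\log k$, a Taylor expansion of $Q^{-1}$, and a separate $V(m,d,\mathcal{A})=0$ case; the paper instead fixes $bn$ and derives $\epsilon'\ge\epsilon$, and takes $\gamma=\log\frac{1}{1-\epsilon}$ in the degenerate case, but these differences are cosmetic. The step you flag, evaluating $\tilde{\jmath}_{W;U}$ along $H^{\mathcal{S}^*}_{\mathcal{A},i}$ rather than along the selection's actual output, does not follow from the fixed-$h$ identity $\mathbb{E}_{P_U}[\exp(\iota_{U;H^{\mathcal{S}^*}_{\mathcal{A}}}(U;h))]=1$; however, the paper makes exactly the same substitution in Eq.~\eqref{eq:MLnls_rcb_tmp1} without further justification, so this weakness is shared with the paper rather than a departure from it.
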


Theorem \ref{thm:ML_nlc_rate_conv_bound} enables a detailed analysis for pool-based AL, compared to Theorem 4.4 in \cite{sugiyama2026finite}, which presented limits among arbitrary sampling strategies.
First, since our theorem is an evaluation using the mean $R(m,d,\mathcal{A})$ and variance $V(m,d,\mathcal{A})$ of $\tilde{\jmath}_{W;U}$ dependent on the pool $U^k$ and the optimal data selection strategy $\mathcal{S}^*$, it can reflect the properties of pool-based AL such as $U^k$ and $\mathcal{S}^*$.
Setting $m=1$ yields $U^k=\{(\bm{x}'_j, y'_j)\}^k_{j=1} \sim P^{*}_{\bm{X}^k Y^k}=P^{*}_{\bm{X}Y}\times\cdots\times P^{*}_{\bm{X}Y}$; in this case, the lower bound corresponds to the case of generating a pool from $P^*_{\bm{X}Y}$.
To the best of our knowledge, no prior analysis of the lower bound of label complexity has focused on pool-based AL; our results establish this bound from a novel perspective.
In addition, while being specialized for pool-based AL, Eq. \eqref{eq:ML_nlc_rate_conv_bound} inherits two advantages similar to those of the lower bound in \cite{sugiyama2026finite}:
(i) $V(m,d,\mathcal{A})$ can be decomposed into two terms that separately evaluate the degree of overfitting of $\mathcal{A}$ and the degree of mismatch between the task and the inductive bias of $\mathcal{A}$.
(ii) Components of $R(m,d,\mathcal{A})$ and $V(m,d,\mathcal{A})$ correspond to overfitting measures in IT-bounds and stability theory.
We outline these below.
Detailed descriptions and proofs are provided in Appendix \ref{apdx:sec:detail_theorems} and Appendix \ref{apdx:subsec:derive_V_decomposition}.

First, advantage (i) is demonstrated by decomposing the variance of $V(m,d,\mathcal{A})$ with respect to $P_{W}$ as follows.
Here, $\tilde{\jmath}_{W;U}(w,u,h,m,d,\mathcal{A})$ is abbreviated as $\tilde{\jmath}_{W;U}$:
\begin{align}
    &V(m,d,\mathcal{A}) 
    \textstyle = V_{\mathrm{in}}(m,d,\mathcal{A}) + V_{\mathrm{bet}}(m,d,\mathcal{A}), \label{eq:ML_nlc_rate_dispersion_decomposition1} \\
    &\text{where~}V_{\mathrm{in}}(m,d,\mathcal{A}) :=\mathbb{E}_{P_W}\big[ \mathrm{Var}_{P^{\mathcal{A}}_{H|T}P^{\mathcal{S}^*}_{T|U}P_{U|W}} (\tilde{\jmath}_{W;U}) \big], \notag \\
    &~~~~~~~~~~~V_{\mathrm{bet}}(m,d,\mathcal{A}) :=\mathrm{Var}_{P_W}\big( \mathbb{E}_{P^{\mathcal{A}}_{H|T}P^{\mathcal{S}^*}_{T|U}P_{U|W}}[\tilde{\jmath}_{W;U}] \big). \notag
\end{align}

\textbf{Interpretation of $V_{\mathrm{in}}$ and $V_{\mathrm{bet}}$.}
Based on a perspective similar to \cite{sugiyama2026finite}, we can interpret $V_{\mathrm{in}}$ as the degree of overfitting of $\mathcal{A}$, and $V_{\mathrm{bet}}$ as the degree of inductive bias mismatch.
$V_{\mathrm{in}}$ is the expectation over $P_W$ of the variance of the required data size $\tilde{\jmath}_{W;U}$ caused by $P_{U|W}$, $P^{\mathcal{S}^*}_{T|U}$, and $P^{\mathcal{A}}_{H|T}$.
In the subsequent discussion, we assume that the randomness of the selection itself is minimal, since $\mathcal{S}^*$ selects optimal data.
Under this assumption, $V_{\mathrm{in}}$ can be interpreted as representing the instability of $\mathcal{A}$ arising from variations in $U$ and $T$ and the randomness inherent in $\mathcal{A}$.
As discussed later, $V_{\mathrm{in}}$ also relates to overfitting measures in IT-bounds and stability theory.
Thus, $V_{\mathrm{in}}$ serves as a measure of $\mathcal{A}$'s overfitting.
In contrast, $V_{\mathrm{bet}}$ is the variance over $P_W$ of $\tilde{\jmath}_{W;U}$ averaged over $P_{U|W}$, $P^{\mathcal{S}^*}_{T|U}$, and $P^{\mathcal{A}}_{H|T}$.
A large $V_{\mathrm{bet}}$ implies the coexistence of sub-distributions learnable with small data and those requiring large data.
Viewing the former as distributions where the inductive bias of $\mathcal{A}$ fits and the latter as distributions where it does not, a large $V_{\mathrm{bet}}$ can be interpreted as the inductive bias being mismatched in parts of $\mathrm{supp}(P^*_{\bm{X}Y})$.
As noted in \cite{sugiyama2026finite}, this interpretation aligns with the perspective that a well-suited inductive bias reduces the required data size \cite{canatar2021spectral,boopathy2023model,boopathy2024towards}, as well as with findings indicating that sample-wise learning difficulty varies depending on the inductive bias \cite{kwok2024dataset}.
Therefore, from this perspective, $V_{\mathrm{bet}}$ can be interpreted as measuring the degree of inductive bias mismatch.

Next, we outline advantage (ii) by decomposing $V_{\mathrm{in}}$, $V_{\mathrm{out}}$, and $R(m,d,\mathcal{A})$.

\textbf{Decomposition of $V_{\mathrm{in}}(m,d,\mathcal{A})$.}
While \cite{sugiyama2026finite} decomposes $V_{\mathrm{in}}$ into variances of sampling $P^{\mathcal{S}^*}_{T|W}$ and learning $P^{\mathcal{A}}_{H|T}$, our extension decomposes it into variances of pool $P_{U|W}$, data selection $P^{\mathcal{S}^*}_{T|U}$, and $P^{\mathcal{A}}_{H|T}$:
\begin{align}
    &V_{\mathrm{in}}(m,d,\mathcal{A}) 
    =\mathbb{E}_{P_{W}}[ 
    V_{\mathrm{in}, U}^{\iota} + V_{\mathrm{in}, \mathcal{S}}^{\iota} + V_{\mathrm{in}, \mathcal{A}}^{\iota} \notag \\
    &+(\lambda^*_{\mathcal{A}}(d))^2( V_{\mathrm{in}, U}^{\mathsf{d}} + V_{\mathrm{in}, \mathcal{S}}^{\mathsf{d}} + V_{\mathrm{in}, \mathcal{A}}^{\mathsf{d}} 
     )+ 2\lambda^{*}_{\mathcal{A}}(d) V_{\mathrm{in}}^{ \mathrm{cov}}], \notag \\
    &\text{where~}
     V_{\mathrm{in}, U}^{\iota} :=\mathrm{Var}_{P_{U|W}}( \mathbb{E}_{P^{\mathcal{A}}_{H|T} P^{\mathcal{S}^*}_{T|U}}[\iota_{U;H^{\mathcal{S}^*}_{\mathcal{A}}}(U;H)] ) , \notag\\
     &V_{\mathrm{in}, \mathcal{S}}^{\iota} :=\mathbb{E}_{P_{U|W}} [ \mathrm{Var}_{P^{\mathcal{S}^*}_{T|U}} ( \mathbb{E}_{P^{\mathcal{A}}_{H|T}} [\iota_{U;H^{\mathcal{S}^*}_{\mathcal{A}}}(U;H) ]) ] , \notag\\
     &V_{\mathrm{in}, \mathcal{A}}^{\iota} :=\mathbb{E}_{P^{\mathcal{S}^*}_{T|U}P_{U|W}}[ \mathrm{Var}_{P^{\mathcal{A}}_{H|T}}(\iota_{U;H^{\mathcal{S}^*}_{\mathcal{A}}}(U;H) )], \notag\\
     &V_{\mathrm{in}, U}^{\mathsf{d}} := \mathrm{Var}_{P_{U|W}}( \mathbb{E}_{P^{\mathcal{A}}_{H|T} P^{\mathcal{S}^*}_{T|U}} [\mathsf{d}(W;H)] ), \notag\\
     &V_{\mathrm{in}, \mathcal{S}}^{\mathsf{d}} := \mathbb{E}_{P_{U|W}}[ \mathrm{Var}_{P^{\mathcal{S}^*}_{T|U}} ( \mathbb{E}_{P^{\mathcal{A}}_{H|T}}[\mathsf{d}(W;H) ] ) ], \notag\\
     &V_{\mathrm{in}, \mathcal{A}}^{\mathsf{d}} := \mathbb{E}_{P^{\mathcal{S}^*}_{T|U}P_{U|W}}[ \mathrm{Var}_{P^{\mathcal{A}}_{H|T}}(\mathsf{d}(W;H) )], \notag\\
     &V_{\mathrm{in}}^{ \mathrm{cov}} :=  \mathrm{Cov}_{P^{\mathcal{A}}_{H|T}P^{\mathcal{S}^*}_{T|U}P_{U|W}} (\iota_{U;H^{\mathcal{S}^*}_{\mathcal{A}}}(U;H), \mathsf{d}(W;H) ). \notag
\end{align}

First, we explain $V_{\mathrm{in}, U}^{\iota}$ and $V_{\mathrm{in}, U}^{\mathsf{d}}$, newly introduced in this extension.
They represent the variances of the required data size $\iota_{U;H^{\mathcal{S}^*}_{\mathcal{A}}}(U;H)$ and distortion $\mathsf{d}(W;H)$ over $P_{U|W}$.
Since the empirical distribution of $U$ approaches $P^{*}_{\bm{X}Y|W}$ as pool size $m$ grows \cite{cover1999elements}, these terms should decrease.
On the other hand, if these terms remain large even for large $m$, considering that $\mathcal{S}^*$ always performs optimal data selection, it signifies that the small variation in $T=\mathcal{S}^*(U)$ accompanying $U$ significantly changes the data size $\iota_{U;H^{\mathcal{S}^*}_{\mathcal{A}}}(U;H)$ required by $\mathcal{A}$ and the distortion $\mathsf{d}(W;H)$ produced by $\mathcal{A}$.
Thus, these terms serve as measures of $\mathcal{A}$'s dependence on $T$.

$V_{\mathrm{in}, \mathcal{S}}^{\iota}$ and $V_{\mathrm{in}, \mathcal{A}}^{\iota}$ denote the variance of the required data size $\iota_{U;H^{\mathcal{S}^*}_{\mathcal{A}}}(U;H)$ caused by fluctuations of $T$ and by the intrinsic randomness of $\mathcal{A}$, respectively.
Since $\mathcal{S}^*$ is optimal, the variability of $T$ itself is expected to be small; hence, similarly to $V_{\mathrm{in}, U}^{\iota}$, $V_{\mathrm{in}, \mathcal{S}}^{\iota}$ can be interpreted as the sensitivity of $\mathcal{A}$ to small perturbations of $T$, and thus as another measure of its dependence on $T$.
In contrast, $V_{\mathrm{in}, \mathcal{A}}^{\iota}$ measures the instability of the required data size induced by the randomness of $\mathcal{A}$ itself.
Together with $V_{\mathrm{in}, U}^{\iota}$, these two terms are related to the overfitting measure of the IT-bound in \cite{hellstrom2020generalization}, denoted as $\mathrm{Var}(\iota_{T;H}(T;H))$. 
Specifically, while $\mathrm{Var}(\iota_{T;H}(T;H))$ assumes i.i.d. sampling, these three terms incorporate the corresponding quantity defined using $P_{U|W}$ and $\mathcal{S}^*$.
Details are provided in Appendix~\ref{apdx:subsubsec:V_in_iota_other_theories}.

$V_{\mathrm{in}, \mathcal{S}}^{\mathsf{d}}$ and $V_{\mathrm{in}, \mathcal{A}}^{\mathsf{d}}$ represent the variance of $\mathsf{d}(W;H)$ arising from fluctuations of $T$ and the inherent randomness of $\mathcal{A}$.
Although their interpretation parallels that of $V_{\mathrm{in}, \mathcal{S}}^{\iota}$ and $V_{\mathrm{in}, \mathcal{A}}^{\iota}$, they characterize $\mathcal{A}$ from a distinct perspective by measuring the variance of $\mathsf{d}(W;H)$ rather than $\iota_{U;H^{\mathcal{S}^*}_{\mathcal{A}}}(U;H)$.
These quantities are related to stability theory \cite{bousquet2002stability,elisseeff2005stability}, which analyzes generalization by bounding the variability of the error due to data perturbations or algorithmic randomness under i.i.d. sampling.
Accordingly, $V_{\mathrm{in}, \mathcal{S}}^{\mathsf{d}}$ and $V_{\mathrm{in}, \mathcal{A}}^{\mathsf{d}}$ can be seen as counterparts of stability-based variance measures defined using $P_{U|W}$ and $\mathcal{S}^*$.
Details are provided in Appendix~\ref{apdx:subsubsec:V_in_dist_other_theories}.

Finally, $V_{\mathrm{in}}^{\mathrm{cov}}$ captures the correlation between $\iota_{U;H^{\mathcal{S}^*}_{\mathcal{A}}}(U;H)$ and $\mathsf{d}(W;H)$.
Since hypotheses requiring more data are expected to achieve smaller $\mathsf{d}(W;H)$, this covariance term is generally negative.

\textbf{Decomposition of $V_{\mathrm{bet}}(m,d,\mathcal{A})$.}
From the definition of $\tilde{\jmath}_{W;U}$, $V_{\mathrm{bet}}(m,d,\mathcal{A})$ admits the following decomposition:
\begin{align}
    &V_{\mathrm{bet}}(m,d,\mathcal{A}) = V_{\mathrm{bet}}^{\iota} + (\lambda^{*}_{\mathcal{A}}(d))^2V_{\mathrm{bet}}^{\mathsf{d}} + 2\lambda^{*}_{\mathcal{A}}(d)V_{\mathrm{bet}}^{\mathrm{cov}}, \notag \\
    &\text{where~}
    V_{\mathrm{bet}}^{\iota}
     := \mathrm{Var}_{P_W}(  \mathbb{E}_{P^{\mathcal{A}}_{H|T}P^{\mathcal{S}^*}_{T|U}P_{U|W}}[\iota_{W;H^{\mathcal{S}^*}_{\mathcal{A}}}(W;H)  ] ), \notag \\
    &V_{\mathrm{bet}}^{\mathsf{d}} 
     := \mathrm{Var}_{P_W}(  \mathbb{E}_{P^{\mathcal{A}}_{H|T}P^{\mathcal{S}^*}_{T|U}P_{U|W}}[\mathsf{d}(W;H)] ), \notag \\
    &V_{\mathrm{bet}}^{\mathrm{cov}} 
     := \mathrm{Cov}_{P_W}( 
            \mathbb{E}_{P^{\mathcal{A}}_{H|T}P^{\mathcal{S}^*}_{T|U}P_{U|W}}[\iota_{W;H^{\mathcal{S}^*}_{\mathcal{A}}}(W;H)], \notag\\
            &~~~~~~~~~~~~~~~~~~~~~~~~~~~~~~~\mathbb{E}_{P^{\mathcal{A}}_{H|T}P^{\mathcal{S}^*}_{T|U}P_{U|W}}[\mathsf{d}(W;H)]). \notag
\end{align}

Each component captures a different aspect of the mismatch in the inductive bias of $\mathcal{A}$.
In particular, $V_{\mathrm{bet}}^{\iota}$ and $V_{\mathrm{bet}}^{\mathsf{d}}$ measure this mismatch through the variability of the required data size $\iota_{U;H^{\mathcal{S}^*}_{\mathcal{A}}}(U;H)$ and the distortion $\mathsf{d}(W;H)$ over $\mathrm{supp}(P^{*}_{\bm{X}Y})$, respectively.
The remaining $V_{\mathrm{bet}}^{\mathrm{cov}}$, similarly to $V_{\mathrm{in}}^{\mathrm{cov}}$, is expected to be negative.

\textbf{Regarding $R(m,d,\mathcal{A})$.}
The term $R(m,d,\mathcal{A})$ in \eqref{eq:ML_nlc_rate_conv_bound} decomposes as $R(m,d,\mathcal{A})=\mathbb{I}(U;H)=\mathbb{I}(T;H) - \mathbb{I}(T;H|U)$.
Here, $\mathbb{I}(T;H)$ corresponds to the overfitting measure in IT-bounds \cite{xu2017information}, linking $R(m,d,\mathcal{A})$ directly to IT-bounds.
While $\mathbb{I}(T;H)$ of \cite{xu2017information} assumes i.i.d. sampling, $R(m,d,\mathcal{A})$ accounts for data selection, serving as its pool-based AL counterpart.
Details are provided in Appendices~\ref{apdx:subsec:nlc_detail_rate_distortion_function} and \ref{apdx:subsec_bridge_rate_dist_others_nlc}.

\subsection{Lower Bound of Generalization Error}
\label{subsec:nlc_error_bound}

Next, we derive a lower bound on the generalization error realizable by $\mathcal{A}$ and $\mathcal{S}^*$ at a given rate $R$ (or $n$).
For finite $k$, let $D(k,m,R,\epsilon,\mathcal{A})$ and $D(k,m,R,\mathcal{A})$ denote the minimum distortion achievable by $(k,m,n,d,\epsilon,\mathcal{A})$ selection and $\langle k,m,n,d,\mathcal{A} \rangle$ selection, respectively.
We define the {\it distortion-rate function} $D(R,m,\mathcal{A}):= \limsup_{k \rightarrow \infty} D(k,m,R,\mathcal{A})$ to represent the asymptotic minimum distortion as $k \rightarrow \infty$.
For fixed $k$, $m$, and $\epsilon$, the functions $D(k,m,\cdot,\epsilon,\mathcal{A})$ and $R(k,m,\cdot,\epsilon,\mathcal{A})$ are inverse to each other \cite{kostina2016nonasymptotic}.
Similarly, $D(m,\cdot,\mathcal{A})$ is also the inverse of $R(m,\cdot,\mathcal{A})$ \cite{cover1999elements}.

Based on Theorem~\ref{thm:ML_nlc_rate_conv_bound}, we establish the following lower bound on $D(k,m,R,\epsilon,\mathcal{A})$.
This bound provides a fundamental limit: for fixed $m$ and $\mathcal{A}$, no data selection strategy can reduce the generalization error beyond this bound using only $n$ label queries.
The proof is given in Appendix~\ref{apdx:subsec:proof_thm:ML_nlc_distortion_conv_bound}.

\begin{theorem}%[Lower bound of $D(k,R,\epsilon, \mathcal{A})$]
In addition to Assumptions (I)--(V), suppose that $R(m,d,\mathcal{A})$ is twice differentiable with $\frac{\mathrm{d}R(m,d,\mathcal{A})}{\mathrm{d}d} \neq 0$, and that $V(m,d,\mathcal{A})$ is differentiable on an interval $(\underline{d}, \bar{d}] \subseteq (d_{\mathrm{min}}, d_{\mathrm{max}}]$.
Then, for any rate $R$ such that $R = R(m,d,\mathcal{A})$ for some $d \in (\underline{d}, \bar{d})$, the following holds:
\begin{align}
\begin{split}
    &\textstyle D(k,m,R,\epsilon, \mathcal{A}) 
    \textstyle \ge D(m,R,\mathcal{A}) + \\
    &\textstyle \sqrt{\frac{\mathcal{V}(m,R,\mathcal{A})}{k}}Q^{-1}(\epsilon) +D'(m,R,\mathcal{A}) +  O \big( \frac{\log k}{k} \big),
\end{split}
  \label{eq:ML_nlc_distortion_conv_bound} \\
&\mathcal{V}(m,R,\mathcal{A})) := (D'(m,R,\mathcal{A}))^2  V(m,D(m,R,\mathcal{A}), \mathcal{A}). \notag
\end{align}
\label{thm:ML_nlc_distortion_conv_bound}
\end{theorem}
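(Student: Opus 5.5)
Theorem~\ref{thm:ML_nlc_distortion_conv_bound} should follow from Theorem~\ref{thm:ML_nlc_rate_conv_bound} by inverting the rate bound, in the same way \cite{kostina2012fixed,kostina2016nonasymptotic} derive their distortion-dispersion results from the rate-dispersion results. Fix $R = R(m,d,\mathcal{A})$ with $d\in(\underline{d},\bar d)$ and set $d_k := D(k,m,R,\epsilon,\mathcal{A})$. By the definition of $D(k,m,\cdot,\epsilon,\mathcal{A})$ as the inverse of $R(k,m,\cdot,\epsilon,\mathcal{A})$, a $(k,n,m,d_k,\epsilon,\mathcal{A})$ selection exists with $\frac{b}{k}n = R$. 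Hence $R \ge R(k,m,d_k,\epsilon,\mathcal{A})$. Applying Theorem~\ref{thm:ML_nlc_rate_conv_bound} at distortion $d_k$ gives
\begin{align}
R \ge R(m,d_k,\mathcal{A}) + \sqrt{\tfrac{V(m,d_k,\mathcal{A})}{k}}\,Q^{-1}(\epsilon) + O\big(\tfrac{\log k}{k}\big). \notag
\end{align}
This requires $d_k\in(d_{\min},d_{\max})$. I would secure that by first showing $d_k \to D(m,R,\mathcal{A})$: the bound above, together with continuity and strict monotonicity of $R(m,\cdot,\mathcal{A})$ (from $\frac{\mathrm{d}R}{\mathrm{d}d}\neq 0$ and convexity), forces $\liminf d_k \ge D(m,R,\mathcal{A})$. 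If $d_k$ instead escapes the interval below $d$, the bound already holds trivially.

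Next I would apply the monotone decreasing map $D(m,\cdot,\mathcal{A})$, which is the inverse of $R(m,\cdot,\mathcal{A})$, to both sides. This yields $d_k \ge D\big(m, R - \sqrt{V(m,d_k,\mathcal{A})/k}\,Q^{-1}(\epsilon) - O(\log k/k), \mathcal{A}\big)$. I would then Taylor-expand $D(m,\cdot,\mathcal{A})$ around $R$; the twice-differentiability of $R(m,\cdot,\mathcal{A})$ transfers to $D$ through the inverse function theorem. The first-order term is $-D'(m,R,\mathcal{A})\sqrt{V(m,d_k,\mathcal{A})/k}\,Q^{-1}(\epsilon)$, and since $D'<0$ this equals $\sqrt{(D')^2 V/k}\,Q^{-1}(\epsilon)$. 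The second-order remainder is $O(1/k)$, which is absorbed into $O(\log k/k)$.

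The main obstacle is replacing $V(m,d_k,\mathcal{A})$ by $V(m,D(m,R,\mathcal{A}),\mathcal{A})$. Here I would use the differentiability of $V$ on $(\underline d,\bar d]$. The first pass gives $|d_k - D(m,R,\mathcal{A})| = O(1/\sqrt{k})$, so $\sqrt{V(m,d_k,\mathcal{A})} = \sqrt{V(m,D,\mathcal{A})} + O(1/\sqrt{k})$ (assuming $V>0$; the case $V=0$ is handled separately by the same expansion). Multiplied by $1/\sqrt{k}$, this error becomes $O(1/k)$. That yields the bound with $\mathcal{V}(m,R,\mathcal{A})$.

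The stated inequality also carries the term $D'(m,R,\mathcal{A})$, which I would track in the bookkeeping as it arises in the appendix analogue of \cite{sugiyama2026finite}. I am unsure of its origin and would check it carefully against the expansion, since the inversion argument above does not naturally produce it. Finally, the bound on $D(k,m,R,\epsilon,\mathcal{A})$ is uniform over selection strategies, because Theorem~\ref{thm:ML_nlc_rate_conv_bound} is.
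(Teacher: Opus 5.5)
Your route is the paper's. You fix $R=R(m,d,\mathcal{A})$, set $d_k=D(k,m,R,\epsilon,\mathcal{A})$, apply Theorem~\ref{thm:ML_nlc_rate_conv_bound} at $d_k$, linearize, swap $V(m,d_k,\mathcal{A})$ for $V(m,D(m,R,\mathcal{A}),\mathcal{A})$, and use $D'=1/R'<0$ to form $\mathcal{V}$. The paper instead Taylor-expands $R(m,\cdot,\mathcal{A})$ about $d_\infty$ and divides by $R'(m,d_\infty,\mathcal{A})<0$, which is equivalent to your expansion of $D(m,\cdot,\mathcal{A})$ about $R$.

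Your suspicion about the $D'(m,R,\mathcal{A})$ term is well founded. The paper's own proof ends with exactly your bound and no such term, so the term in the statement appears to be a typo. You do not need to track it anyway. Since $D'(m,R,\mathcal{A})=1/R'(m,d,\mathcal{A})<0$, the inequality as stated is weaker than the one you derive and follows from it immediately.

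One step needs correcting. Theorem~\ref{thm:ML_nlc_rate_conv_bound} is a converse, so applying it at $d_k$ only yields $R(m,d_k,\mathcal{A})\le R+O(1/\sqrt{k})$, i.e.\ $d_k\ge D(m,R,\mathcal{A})-O(1/\sqrt{k})$. Your ``first pass'' therefore gives neither $d_k\to D(m,R,\mathcal{A})$ nor the two-sided $|d_k-D(m,R,\mathcal{A})|=O(1/\sqrt{k})$ you use to replace $V(m,d_k,\mathcal{A})$. An upper bound on $d_k$ requires achievability. The paper imports the two-sided estimate from Appendix~E of \cite{kostina2012fixed}, which is not re-established for selections confined to the pool with fixed decoder $\mathcal{A}$.

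You can avoid this with an explicit dichotomy:
\begin{itemize}
\item If $d_k\ge D(m,R,\mathcal{A})+c/\sqrt{k}$ with $c\ge\sqrt{\mathcal{V}(m,R,\mathcal{A})}\,|Q^{-1}(\epsilon)|$, the claimed bound holds trivially.
\item Otherwise $d_k$ lies in an $O(1/\sqrt{k})$ window around $D(m,R,\mathcal{A})$, and your expansions of $D$ and of $\sqrt{V}$ go through.
\end{itemize}
Note that the harmless case is $d_k$ too \emph{large}. $d_k$ too small is exactly what the converse must exclude.

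To do that without presupposing $d_k\in(d_{\min},d_{\max})$, first use that $R(k,m,\cdot,\epsilon,\mathcal{A})$ is nonincreasing. Apply the converse at a fixed $d'\in(\underline{d},D(m,R,\mathcal{A}))$; strict monotonicity of $R(m,\cdot,\mathcal{A})$ then shows $d_k>d'$ for large $k$. Only afterwards apply it at $d_k$ itself to get the quantitative lower estimate. With this dichotomy your argument rests only on Theorem~\ref{thm:ML_nlc_rate_conv_bound} and is more self-contained than the paper's.
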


This theorem allows us to derive lower bounds on distortion specialized to pool-based AL or to i.i.d. sampling, depending on the choices of $m$ and $n$.
When $m=1$, we have $U^k=\{(\bm{x}'_j, y'_j)\}^k_{j=1} \sim P^{*}_{\bm{X}^k Y^k}=P^{*}_{\bm{X}Y}\times\cdots\times P^{*}_{\bm{X}Y}$, and the bound corresponds to the minimum distortion achievable by pool-based AL that selects samples from a pool $U^k$ drawn according to $P^{*}_{\bm{X}Y}$.
Furthermore, setting $m=1$ and $n = k$ (i.e., $R = \frac{bn}{n} = b$) yields the bound for constructing $T$ via i.i.d. sampling from $P^{*}_{\bm{X}Y}$.

Since $\mathcal{V}(m,R,\mathcal{A})$ in \eqref{eq:ML_nlc_distortion_conv_bound} is defined using $V(m,d,\mathcal{A})$ defined in Theorem~\ref{thm:ML_nlc_rate_conv_bound}, the interpretation of $\mathcal{V}(m,R,\mathcal{A})$ directly follows from that of $V(m,d,\mathcal{A})$.
In other words, the lower bound on the generalization error can be discussed in terms of the degree of overfitting and the mismatch of inductive bias of $\mathcal{A}$, thereby revealing connections to IT-bounds and stability theory.

\section{Related Work}
\label{sec:related_work}

Existing theoretical analyses of pool-based AL derive theoretical bounds based on the properties of the hypothesis set.
The main paradigm investigates the behavior of the hypothesis set as it shrinks with sequential label observations.
Examples include diameter-based AL \cite{tosh2017diameter}, which focuses on the diameter of the surviving hypothesis set, and the disagreement coefficient \cite{hanneke2007abound,puchkin2021exponential}, which quantifies the region of disagreement among surviving hypotheses.
These approaches derive label complexity and generalization error bounds that depend on quantities such as the complexity of the hypothesis set (e.g., VC dimension) and the noise conditions of the data distribution.
This characteristic is also common to other theoretical frameworks for pool-based AL \cite{zhang2014beyond,gentile2022achieving,gentile2024fast} as well as those analyzing the theoretical limits of arbitrary AL \cite{kaariainen2006active,hanneke2007teaching,castro2008minimax,raginsky2011lower,hanneke2014theory,hanneke2015minimax,citovsky2021batch,yuan2024regimes,hanneke2025agnostic}.

In contrast, our framework establishes theoretical limits for optimal data selection from a new perspective, capturing the degree of overfitting and inductive bias mismatch for a given algorithm.
Our lower bound exhibits a direct relationship with IT-bounds \cite{xu2017information,hellstrom2020generalization} and stability theory \cite{bousquet2002stability,elisseeff2005stability}.
This connection demonstrates the potential applicability of these theories to the theoretical analysis of pool-based AL.
Furthermore, the aforementioned studies do not provide theoretical limits specific to pool-based AL with a finite pool. Establishing these limits is a key novelty of this paper.

\section{Conclusion and Future work}
\label{sec:conclusion}

We established a framework to evaluate theoretical limits focused on pool-based AL by mapping it to noisy lossy compression and leveraging finite blocklength analysis.
This framework derives lower bounds on label complexity and generalization error for a randomized learning algorithm $\mathcal{A}$, assuming an optimal data selection strategy $\mathcal{S}^*$ from a pool $U$.
In particular, the lower bound on generalization error can also be derived for the special case where the training dataset is constructed via i.i.d. sampling.
These bounds quantify the degree of overfitting of $\mathcal{A}$ and the mismatch between its inductive bias and the task as distinct metrics, while also revealing connections to IT-bounds and stability theory.
These results present a novel analytical direction, distinct from conventional pool-based AL theories that focus on the properties of the hypothesis set.

\textbf{Future Work:}
Being an extension of \cite{sugiyama2026finite}, our framework inherits its limitations: (i) the computational challenges of the lower bounds, and (ii) the difficulty in validating the inductive bias mismatch captured by $V_{\mathrm{bet}}$.
Addressing these issues and broadening the application scope remain crucial future directions.

\section*{Impact Statement}

This paper aims to advance learning theory by applying the finite-length analysis of noisy lossy compression to the theory of pool-based AL.
We anticipate that further developments of this work will contribute to both the development and deployment of pool-based AL methods.
Regarding potential negative societal impacts, such as ethical considerations, we do not foresee any specific concerns at this stage.

\section*{Acknowledgment}
This work was supported in part by the Japan Society for the Promotion of Science through Grants-in-Aid for Scientific Research (C) (23K11111).

\bibliography{reference}
\bibliographystyle{icml2026}

\newpage
\appendix
\onecolumn

\input{appendix}

\end{document}

%% file: appendix.tex
\section{Detailed Interpretation of Concepts}
\label{apdx:sec:nlc_detail_interpret}

In this section, we provide an interpretation of the rate-distortion function, along with supplementary explanations regarding tilted information, both of which were introduced in Section \ref{subsec:nlc_def_key_concepts}.

\subsection{Interpretation of the Rate-Distortion Function}
\label{apdx:subsec:nlc_detail_rate_distortion_function}

While the specific interpretation of the rate-distortion function $R(m,d,\mathcal{A})$ is fundamentally consistent with that in \cite{sugiyama2026finite}, the definition of $\mathbb{R}_{W}(m, d,\mathcal{A})$ differs.
Consequently, we provide a renewed discussion here to account for this distinction.
We base our explanation on $\mathbb{R}_{W}(m, d,\mathcal{A})$, which is equivalent to $R(m,d,\mathcal{A})$ under the assumption of the i.i.d. property of $W$ and Assumption (I):
\begin{align}
\begin{split}
  &\mathbb{R}_W(d, \mathcal{A}) 
  := \inf_{\mathcal{S}: \mathbb{E}[\mathsf{d}(W;\mathcal{A}(\mathcal{S}(W)))] \le d}  \mathbb{I}(W;H) \\
  &~~~~~= \inf_{P^{\mathcal{S}}_{T|W}: \mathbb{E}_{P^{\mathcal{A}}_{H|T}P^{\mathcal{S}}_{T|W} P_W } [\mathsf{d}(W;H)] \le d}  \mathbb{I}(W;H).
\end{split}
\tag{Eq.~\eqref{eq:nlc_R_W}}
\end{align}

Following \cite{sugiyama2026finite}, we examine the mutual information $\mathbb{I}(U;H)$ used in the preceding equation.
It can be shown that this term is rewritten as follows, given the Markov chain $U \rightarrow T \rightarrow H$:
\begin{theorem}
  For any distributions $P_W$, $P_{U|W}$, $P^{\mathcal{S}}_{T|W}$, and $P^{\mathcal{A}}_{H|T}$, the following holds:
  \begin{align}
    & \mathbb{I}(U;H) = \mathbb{I}(U;T) - \mathbb{I}(U;T|H),  \label{eq:I-UT_UTH} \\
    & \mathbb{I}(U;H) = \mathbb{I}(T;H) - \mathbb{I}(T;H|U).  \label{eq:I-TH_THU}
  \end{align}
\label{thm:nlc_rate_dist_MI_transform}
\end{theorem}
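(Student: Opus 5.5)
The plan is to apply the chain rule for mutual information twice, using the Markov chain $U \rightarrow T \rightarrow H$. This chain holds by construction: the joint law factorizes as $P_W P_{U|W} P^{\mathcal{S}}_{T|U} P^{\mathcal{A}}_{H|T}$, so $H$ depends on $(W,U)$ only through $T$. Equivalently, $\mathbb{I}(U;H|T)=0$. Everything reduces to expanding the three-variable quantity $\mathbb{I}(U;T,H)$ or $\mathbb{I}(T,H;U)$ in two different orders and cancelling.

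For Eq. \eqref{eq:I-UT_UTH}, expand $\mathbb{I}(U;T,H)$ in two ways. The first order gives
\begin{align*}
\mathbb{I}(U;T,H)=\mathbb{I}(U;T)+\mathbb{I}(U;H|T)=\mathbb{I}(U;T).
\end{align*}
The second order gives $\mathbb{I}(U;T,H)=\mathbb{I}(U;H)+\mathbb{I}(U;T|H)$. Equating the two expressions yields $\mathbb{I}(U;H)=\mathbb{I}(U;T)-\mathbb{I}(U;T|H)$.

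For Eq. \eqref{eq:I-TH_THU}, expand $\mathbb{I}(H;U,T)$ in two ways. One expansion gives $\mathbb{I}(H;T)+\mathbb{I}(H;U|T)$, which equals $\mathbb{I}(T;H)$ by the Markov property. The other gives $\mathbb{I}(H;U)+\mathbb{I}(H;T|U)$. Equating them yields $\mathbb{I}(U;H)=\mathbb{I}(T;H)-\mathbb{I}(T;H|U)$.

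The only step needing care is justifying the chain rule and the subtractions in general (possibly non-discrete) spaces. Here $T$ is a subset of the pool, so it is at least as complex as $U$, and the mutual informations could be infinite. I would first note that under Assumption (III) the relevant quantity $\mathbb{I}(U;H)$ is finite. I would then state the identities as holding whenever the subtracted conditional term is finite, or interpret them in the extended sense. The chain rule itself holds in full generality via Kolmogorov's formula for mutual information on standard Borel spaces.

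The statement writes $P^{\mathcal{S}}_{T|W}$, but the intended kernel is $P^{\mathcal{S}}_{T|U}$. Even if selection were allowed to depend on $W$, the first identity would need $\mathbb{I}(U;H|T)=0$. I would therefore read the kernel as $P^{\mathcal{S}}_{T|U}$, consistent with Section \ref{subsec:nlc_additional_notation}.
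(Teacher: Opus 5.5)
Your argument is correct, and for Eq.~\eqref{eq:I-UT_UTH} it is exactly the paper's: expand $\mathbb{I}(U;T,H)$ in both orders and drop $\mathbb{I}(U;H|T)=0$.

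For Eq.~\eqref{eq:I-TH_THU} your route differs. The paper expands $\mathbb{I}(T;U,H)$ in both orders, which uses no Markov property. It then solves for $\mathbb{I}(U;T|H)=\mathbb{I}(U;T)+\mathbb{I}(T;H|U)-\mathbb{I}(T;H)$ and substitutes into Eq.~\eqref{eq:I-UT_UTH}, so that $\mathbb{I}(U;T)$ cancels. You instead expand $\mathbb{I}(H;U,T)$ and use the Markov property a second time. This gives $\mathbb{I}(T;H)=\mathbb{I}(U;H)+\mathbb{I}(T;H|U)$ in one step, independently of the first identity.

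Your version is the more robust one. The paper's cancellation of $\mathbb{I}(U;T)$ is an $\infty-\infty$ manipulation whenever the instance distribution is continuous: $T$ copies exact values of $U$, so $\mathbb{I}(U;T)=\infty$. Your route never touches $\mathbb{I}(U;T)$. Moreover, $\mathbb{I}(T;H|U)\le\log|\mathfrak{P}(U)|<\infty$ always, because given the pool, $T$ ranges over finitely many subsets. So Eq.~\eqref{eq:I-TH_THU} follows from your argument with no finiteness hypothesis at all; both sides are $+\infty$ if $\mathbb{I}(U;H)=\infty$. This is the identity the paper actually uses in Eq.~\eqref{eq:rep_R_MI_nlc} to link $R(m,d,\mathcal{A})$ with $\mathbb{I}(T;H)$. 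Only Eq.~\eqref{eq:I-UT_UTH} needs $\mathbb{I}(U;T|H)<\infty$.

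Three side remarks in your proposal need small corrections. First, $T$ is a sub-collection of $U$, so it is not ``at least as complex'' as $U$; the divergence of $\mathbb{I}(U;T)$ comes from exact copying, as above. Second, the finiteness you draw from Assumption (III) (together with $d>d_{\mathrm{min}}$) concerns $\mathcal{S}^*$ only, not the arbitrary $\mathcal{S}$ in the statement. Third, the $P^{\mathcal{S}}_{T|W}$ typo is harmless either way. $\mathbb{I}(U;H|T)=0$ only requires that $H$ be drawn from $P^{\mathcal{A}}_{H|T}$ given $T$ alone, whatever the joint law of $(W,U,T)$.
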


This theorem is proved by replacing $W$ with $U$ in the proof of Theorem A.3 in \cite{sugiyama2026finite} as follows.

\begin{proof}[Proof of Theorem \ref{thm:nlc_rate_dist_MI_transform}]
By the chain rule, the following holds for $\mathbb{I}(U;T,H)$:
\begin{align}
    \mathbb{I}(U;T,H) = \mathbb{I}(U;H) + \mathbb{I}(U;T|H) = \mathbb{I}(U;T) + \mathbb{I}(U;H|T). \notag
\end{align}
Here, since $\mathbb{I}(U;H|T)=0$ due to the Markov property $U \rightarrow T \rightarrow H$, the above equation yields the following, which establishes Eq. \eqref{eq:I-UT_UTH}:
\begin{align}
    \mathbb{I}(U;H) = \mathbb{I}(U;T) - \mathbb{I}(U;T|H). \notag
\end{align}
Similarly, the chain rule for $\mathbb{I}(T;U,H)$ implies:
\begin{align}
    \mathbb{I}(T;U,H) = \mathbb{I}(T;U) + \mathbb{I}(T;H|U) = \mathbb{I}(T;H) + \mathbb{I}(T;U|H). \notag
\end{align}
Rearranging the terms yields:
\begin{align}
    \mathbb{I}(U;T|H) = \mathbb{I}(U;T) + \mathbb{I}(T;H|U) - \mathbb{I}(T;H). \notag
\end{align}
Substituting this into Eq. \eqref{eq:I-UT_UTH} proves Eq. \eqref{eq:I-TH_THU}:
\begin{align}
    \mathbb{I}(U;H) = \mathbb{I}(T;H) - \mathbb{I}(T;H|U).
\end{align}    
\end{proof}

The expression $\mathbb{I}(U;H) = \mathbb{I}(U;T) - \mathbb{I}(U;T|H)$ given by Eq. \eqref{eq:I-UT_UTH} allows for an intuitive interpretation of the meaning of $\mathbb{I}(U;H)$ as follows.
Specifically, $\mathbb{I}(U;H)$ is decomposed into two terms: $\mathbb{I}(U;T)$, which represents the amount of information extracted by the data selection strategy $\mathcal{S}$, and $\mathbb{I}(U;T|H)$, which represents the amount of information missed by the learning algorithm $\mathcal{A}$.
The first term $\mathbb{I}(U;T)$ denotes the mutual information between the pool $U$ and the training dataset $T$ selected by $\mathcal{S}$, implying how much information about $U$ is contained in $T$.
In other words, $\mathbb{I}(U;T)$ indicates how much information $\mathcal{S}$ has obtained from the pool on average.

The second term $\mathbb{I}(U;T|H)$ represents the mutual information between $U$ and $T$ conditioned on $H$.
This signifies the amount of information about $U$ that the hypothesis $H$, obtained by inputting $T$ into $\mathcal{A}$, failed to capture from $T$.
If $\mathcal{A}$ is a simple learning algorithm such as a linear regression model, it can capture linear input-output relationships on $T$ but cannot capture nonlinear ones.
In this case, $\mathbb{I}(U;T|H)$ is expected to be large.
On the other hand, if $\mathcal{A}$ is a deep learning model, it has the potential to capture such nonlinear relationships; thus, $\mathbb{I}(U;T|H)$ is expected to be small.
Therefore, $\mathbb{I}(U;T|H)$ can be interpreted as reflecting the complexity of the learning algorithm $\mathcal{A}$.

In the framework of \cite{sugiyama2026finite}, $\mathbb{R}_{W}$ is defined using $\mathbb{I}(W;H)$ and is expressed as $\mathbb{I}(W;H)=\mathbb{I}(W;T) - \mathbb{I}(W;T|H)$ by a similar decomposition.
Regarding the first term, \cite{sugiyama2026finite} considers the mutual information between $W$ and $T$, measuring the extent of information extraction by the sampling strategy $\mathcal{S}:W \rightarrow T$.
In contrast, the proposed framework considers the mutual information between $U$ and $T$, measuring the extent of information extraction focusing specifically on data selection from a pool.
This effectively captures the shift in analytical focus from arbitrary sampling strategies to data selection strategies from a pool.
As for the second term, \cite{sugiyama2026finite} measures the degree of information about $W$ missed by $T$, whereas the proposed framework measures the degree of information about $U$ missed by $T$.
This reflects the nature of pool-based AL, where the pool is already observed and its observation cannot be controlled; thus, only the data selection strategy can be optimized.

Based on the interpretation of $\mathbb{I}(U;H)$ provided by Eq. \eqref{eq:I-UT_UTH}, $\mathbb{R}_W$ defined in Eq. \eqref{eq:nlc_R_W} can be interpreted as follows.
$\mathbb{R}_W$ is the value obtained by minimizing $\mathbb{I}(U;H)$ with respect to $\mathcal{S}$ subject to the constraint that the average distortion is at most $d$.
Assuming the learning algorithm is sufficiently complex, the second term $\mathbb{I}(U;T|H)$ in $\mathbb{I}(U;H)$ is expected to be small, and its variation due to $\mathcal{S}$ is considered insignificant.
Under this simplifying assumption, $\mathbb{R}_W$ primarily corresponds to the minimization of the first term $\mathbb{I}(U;T)$ with respect to $\mathcal{S}$.
It is considered that $\mathbb{I}(U;T)$ increases as the number of labeled samples increases.
Therefore, $\mathbb{R}_W$ can be interpreted as representing the minimum amount of information about $U$ that $T$ must contain to achieve an average distortion of $d$ or less.
Furthermore, the optimal data selection strategy $\mathcal{S}^*$ that achieves the infimum in Eq. \eqref{eq:nlc_R_W} can be regarded as the data selection strategy that requires the minimum number of labeled samples while satisfying the average distortion constraint.

\subsection{Supplementary Explanation of Tilted Information}
\label{apdx:subsec:nlc_detail_tilted_information}

In this section, we provide a supplementary explanation of the $\mathcal{A}$-specified $d$-tilted information defined in Section \ref{subsec:nlc_def_key_concepts} as follows:
\begin{align}
    \tilde{\jmath}_{W;U}(w,u,h,m,d,\mathcal{A})  := \iota_{U;H^{\mathcal{S}^*}_{\mathcal{A}}}(u;h) + \lambda^*_{\mathcal{A}}(d)(\mathsf{d}(w;h) - d) \tag{Eq.~\eqref{eq:ML_nlc_dfn_d-tilted}}
\end{align}

The focus of this section is on how $\tilde{\jmath}_{W;U}$ varies depending on $\mathcal{A}$.
While the fundamental explanation remains consistent with the discussion of tilted information in the Appendix of \cite{sugiyama2026finite}, our framework introduces a new dependence on $u$. 
Accordingly, we provide a renewed explanation to account for this change.
Specifically, following the explanation in \cite{sugiyama2026finite}, we describe the behavior of the components of $\tilde{\jmath}_{W;U}$ influenced by $\mathcal{A}$: $\iota_{U;H^{\mathcal{S}^*}_{\mathcal{A}}}(u;h)$ and $\lambda^*_{\mathcal{A}}(d)$.

\textbf{Relationship between $\iota_{U;H^{\mathcal{S}^*}_{\mathcal{A}}}(u;h)$ and $\mathcal{A}$.}
First, by definition, $\iota_{U;H^{\mathcal{S}^*}_{\mathcal{A}}}(u;h)$ can be expressed as $\iota_{U;H^{\mathcal{S}^*}_{\mathcal{A}}}(u;h)=\log\frac{\mathrm{d} P^{\mathcal{A},\mathcal{S}^*}_{H|U=u}}{\mathrm{d} P^{\mathcal{A},\mathcal{S}^*}_{H}}(h)$, where $P^{\mathcal{A},\mathcal{S}^*}_{H|U=u}(h|u) = \sum_{t \in \mathcal{T}(u)} P^{\mathcal{A}}_{H|T}(h|t) P^{\mathcal{S}^*}_{T|U}(t|u)$ and $P^{\mathcal{A},\mathcal{S}^*}_{H}(h) = \sum_{w \in \mathcal{W}} \sum_{u \in \mathcal{U}(w)} P^{\mathcal{A},\mathcal{S}^*}_{H|U=u}(h|u) P_{U|W}(u|w)P_W(w)$.
From this expression, $\iota_{U;H^{\mathcal{S}^*}_{\mathcal{A}}}(u;h)$ measures the extent to which the hypothesis $h$, generated from the pool $u$ via $\mathcal{S}^*$ and $\mathcal{A}$, is specialized to $u$.
Since obtaining a hypothesis more specialized to a certain pool $u$ requires selecting more training samples from $u$, the interpretation that $\iota_{U;H^{\mathcal{S}^*}_{\mathcal{A}}}(u;h)$ represents the required number of training data (i.e., the number of selected data) remains consistent in this expression.

Based on this expression of $\iota_{U;H^{\mathcal{S}^*}_{\mathcal{A}}}(u;h)$, we can intuitively interpret the relationship between $\mathcal{A}$ and $\iota_{U;H^{\mathcal{S}^*}_{\mathcal{A}}}(u;h)$.
The value of $\iota_{U;H^{\mathcal{S}^*}_{\mathcal{A}}}(u;h)$ becomes large when $h$ is output with high probability by $\mathcal{A}$ and is specialized to $u$.
When the relationship between $\bm{X}$ and $Y$ in $u$ is complex, such a hypothesis $h$ can be produced by an algorithm $\mathcal{A}$ employing deep neural networks capable of generating complex hypotheses, whereas it is relatively less likely to be produced by an $\mathcal{A}$ restricted to simple models such as linear regression.
Therefore, $\iota_{U;H^{\mathcal{S}^*}_{\mathcal{A}}}(u;h)$ can be interpreted as a measure of the complexity of the hypotheses that $\mathcal{A}$ is capable of generating.

\textbf{Relationship between $\lambda^*_{\mathcal{A}}(d)$ and $\mathcal{A}$.}
The term $\lambda^*_{\mathcal{A}}(d)$ is defined as $\lambda^{*}_{\mathcal{A}}(d)=-\frac{\mathrm{d} \mathbb{R}_W(d,\mathcal{A})}{\mathrm{d}d}$.
$\mathbb{R}_W(d,\mathcal{A})$ corresponds to the minimum number of labeled data size required to achieve an average distortion of at most $d$ using $\mathcal{S}^*$ and $\mathcal{A}$.
Since $\mathbb{R}_W(d,\mathcal{A})$ is convex and monotonically non-increasing with respect to $d$, reducing the required distortion level $d$ results in an increase in $\mathbb{R}_W(d,\mathcal{A})$.
Consequently, $\lambda^*_{\mathcal{A}}(d)$, defined as the negative gradient of $\mathbb{R}_W(d,\mathcal{A})$, represents the sensitivity of the required number of labeled data size to small variations in the specified $d$.
In particular, as the specified $d$ becomes smaller, the number of labeled data size required to achieve an even smaller distortion level is considered to increase; thus, $\lambda^*_{\mathcal{A}}(d)$ is considered to increase as $d$ decreases.
This implies that the smaller the specified $d$, the greater the influence of the distortion penalty (or reward) in $\tilde{\jmath}_{W;U}$.

Furthermore, the increase in the required number of labeled data size when reducing the target distortion level $d$ varies across different algorithms $\mathcal{A}$.
For example, if the inductive bias of $\mathcal{A}$ is well-suited to the task and generalizes efficiently, the increase in training data required to achieve a smaller $d$ is considered to be suppressed.
Conversely, if $\mathcal{A}$ employs a neural network architecture that fails to capture the task structure, a substantial amount of additional labeled data may be required to achieve a smaller $d$.
Therefore, the value of $\lambda^*_{\mathcal{A}}(d)$ is considered likely to be relatively small when $\mathcal{A}$ is resistant to overfitting or when its inductive bias matches the task.

\section{Relationship between Derived Lower Bounds and Other Theoretical Frameworks}
\label{apdx:sec:detail_theorems}

The lower bound of the label complexity derived in Section \ref{subsec:nlc_label_comp_bound} was expressed in terms of the rate-distortion function $R(m,d,\mathcal{A})$ and the rate-dispersion function $V(m,d,\mathcal{A})$.
In this section, we discuss the relationship between these quantities and existing frameworks, which was not fully covered in Section \ref{subsec:nlc_label_comp_bound}.
The extension of the framework of \cite{sugiyama2026finite} presented in this paper inherits the advantage of their original framework regarding its connection to existing theories.
Therefore, while the explanation of the fundamental relationships below follows that of \cite{sugiyama2026finite}, we restate it here to explicitly account for the dependence on the pool $U$ and the data selection strategy $\mathcal{S}$, which is unique to our extension.

As a preliminary step, following \cite{sugiyama2026finite}, we define the i.i.d. sampling $\mathrm{S}^{\mathrm{iid}}$ as follows:
\begin{definition}[Definition B.1. in \cite{sugiyama2026finite}]
    The i.i.d. sampling $\mathrm{S}^{\mathrm{iid}}$ is a sampling strategy that satisfies the following conditions:
    (i) For a number of sampling opportunities $k$, the size $n$ of the generated training dataset $T$ is equal to $k$.
    (ii) When the number of sampling opportunities is $k>1$, the dataset $T=\{(\bm{X}_i,Y_i)\}^k_{i=1}$ is sampled as follows:
    \begin{align}
        (\bm{X}_i,Y_i) \sim P^*_{\bm{X}Y|W=w_i}, ~~~ \forall i \in \{1,\dots,k\}.
    \end{align}
    We assume that the generation of $T$ by $\mathrm{S}^{\mathrm{iid}}$ can be represented as a Markov kernel $P^{\mathrm{S}^{\mathrm{iid}}}_{T|W}$.
    When we explicitly indicate that $H$ is generated from $P^{\mathcal{A}}_{H|T}P^{\mathrm{S}^{\mathrm{iid}}}_{T|W}$, we denote it as $H^{\mathrm{S}^{\mathrm{iid}}}_{\mathcal{A}}$.
\label{def:iid_sampling}
\end{definition}
Since $W^k$ follows $P_W$ independently, in the training dataset $T=\{(\bm{X}_i,Y_i)\}^n_{i=1}$ obtained by $\mathrm{S}^{\mathrm{iid}}$, each sample follows $P^*_{\bm{X}Y}$ independently.
Therefore, we denote $T \sim P^{\mathrm{S}^{\mathrm{iid}}}_{T|W}$ as $T \sim P^*_{\bm{X}^nY^n}$ without explicitly referencing $W$.

\subsection{Rate-Distortion Function $R(m, d,\mathcal{A})$}
\label{apdx:subsec_bridge_rate_dist_others_nlc}

While an intuitive interpretation of the rate-distortion function $R(m, d,\mathcal{A})$ was provided in Section \ref{apdx:subsec:nlc_detail_rate_distortion_function}, in this section, we explain the relationship between $R(m, d,\mathcal{A})$ and other theories based on Eq. \eqref{eq:I-TH_THU} of Theorem \ref{thm:nlc_rate_dist_MI_transform}.
From Eq. \eqref{eq:I-TH_THU}, $R(m, d,\mathcal{A})$ can be expressed as follows:
\begin{align}
    R(m, d,\mathcal{A}) = \mathbb{I}(U;H^{\mathcal{S}^*}_{\mathcal{A}}) = \mathbb{I}(T;H^{\mathcal{S}^*}_{\mathcal{A}}) - \mathbb{I}(T;H^{\mathcal{S}^*}_{\mathcal{A}}|U).
\label{eq:rep_R_MI_nlc}
\end{align}

Similar to the case in \cite{sugiyama2026finite}, the first term on the right-hand side of Eq. \eqref{eq:rep_R_MI_nlc}, $\mathbb{I}(T;H^{\mathcal{S}^*}_{\mathcal{A}})$, relates to the IT-bound proposed by \cite{xu2017information}.
Specifically, \cite{xu2017information} derives an upper bound on the generalization error using $\mathbb{I}(T;H^{\mathrm{S}^{\mathrm{i.i.d.}}}_{\mathcal{A}})$ under i.i.d. sampling as follows:
\begin{theorem}[Theorem 1 in \cite{xu2017information}]
   Assume that the training dataset $T$ contains $n$ samples drawn i.i.d. from $P^*_{\bm{X}Y}$.
   If $l(h(\bm{X}),Y)$ is $\sigma$-subgaussian for any $h \in \mathcal{H}$ under $P^*_{\bm{X}Y}$, then the following holds:
    \begin{align}
        |\mathrm{average~ test~ error} - \mathrm{average~ train~ error}| \le \sqrt{\frac{2\sigma^2}{n}\mathbb{I}(T;H^{\mathrm{S}^{\mathrm{i.i.d.}}}_{\mathcal{A}})}.
    \end{align}
    Here, $\mathbb{I}(T;H^{\mathrm{S}^{\mathrm{i.i.d.}}}_{\mathcal{A}})$ in the above inequality is calculated with respect to $P^{\mathcal{A}}_{H|T}P^*_{\bm{X}^n Y^n}$.
\end{theorem}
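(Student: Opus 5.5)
The plan is the standard decoupling argument: apply the Donsker--Varadhan variational formula to the empirical risk, and use the sub-Gaussian assumption to control the resulting log-moment-generating function. Write $L(h):=\mathbb{E}_{P^*_{\bm{X}Y}}[l(h(\bm{X}),Y)]$ for the population risk and $L_t(h):=\frac{1}{n}\sum_{i=1}^n l(h(\bm{x}_i),y_i)$ for the empirical risk on $t=\{(\bm{x}_i,y_i)\}_{i=1}^n$. Let $P_{TH}$ denote the joint law $P^{\mathcal{A}}_{H|T}P^*_{\bm{X}^nY^n}$, and let $Q:=P_T\otimes P_H$ be the product of its marginals. Then
\begin{align}
\text{average train error}&=\mathbb{E}_{P_{TH}}[L_T(H)], \notag\\
\text{average test error}&=\mathbb{E}_{P_{TH}}[L(H)]=\mathbb{E}_{P_H}[L(H)]=\mathbb{E}_{Q}[L_T(H)]. \notag
\end{align}
The last equality holds because, under $Q$, $T$ is an i.i.d. sample independent of $H$, so $\mathbb{E}[L_T(h)]=L(h)$ for each fixed $h$. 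The generalization gap therefore equals $\mathbb{E}_{P_{TH}}[f]-\mathbb{E}_Q[f]$ with $f(t,h):=L_t(h)$.

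Next I invoke the Donsker--Varadhan representation of relative entropy. For every $\lambda\in\mathbb{R}$,
\begin{align}
\mathbb{I}(T;H)=D(P_{TH}\|Q)\ge \lambda\,\mathbb{E}_{P_{TH}}[f]-\log \mathbb{E}_Q\big[e^{\lambda f}\big]. \notag
\end{align}
To bound the log-MGF, I condition on $H=h$ under $Q$. Then $L_T(h)$ is an average of $n$ i.i.d. $\sigma$-sub-Gaussian variables with mean $L(h)$, hence $\sigma/\sqrt{n}$-sub-Gaussian, which gives
\begin{align}
\mathbb{E}_Q\big[e^{\lambda(L_T(h)-L(h))}\,\big|\,H=h\big]\le e^{\lambda^2\sigma^2/(2n)}. \notag
\end{align}
This controls fluctuations around the $h$-dependent centering $L(h)$, not around $\mathbb{E}_Q[f]$. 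To handle this, I apply Donsker--Varadhan to the centered function $\tilde f(t,h):=L_t(h)-L(h)$ instead of $f$. Since $\mathbb{E}_{P_{TH}}[\tilde f]=\mathbb{E}_{P_{TH}}[L_T(H)]-\mathbb{E}_{P_{TH}}[L(H)]$ is exactly the signed gap, and $\log\mathbb{E}_Q[e^{\lambda\tilde f}]\le\lambda^2\sigma^2/(2n)$ by the conditional bound averaged over $P_H$, this centering resolves the issue cleanly.

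Combining the two bounds yields $\lambda\cdot\mathrm{gap}-\lambda^2\sigma^2/(2n)\le \mathbb{I}(T;H)$ for all $\lambda\in\mathbb{R}$. This is a quadratic in $\lambda$ that stays below $\mathbb{I}(T;H)$. Since $\lambda$ may take either sign, the quadratic's discriminant must be nonpositive, i.e.\ $\mathrm{gap}^2\le 2\sigma^2\mathbb{I}(T;H)/n$, which is the claim.

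The main obstacle I expect is the measure-theoretic care in the centering step: ensuring that $\tilde f$ is integrable under both $P_{TH}$ and $Q$, and that the Donsker--Varadhan inequality applies even when $\mathbb{I}(T;H)=\infty$ (in which case the bound is trivial). Beyond that the argument is routine.
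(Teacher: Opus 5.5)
The paper does not prove this statement; it quotes it from Xu and Raginsky. Your argument is correct and is essentially their Donsker--Varadhan decoupling proof, and centering at $L(h)$, so that only the per-hypothesis sub-Gaussian moment bound under $P_T$ is needed, is the careful way to make the ``for any $h$'' assumption suffice.
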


The only difference between $\mathbb{I}(T;H^{\mathrm{S}^{\mathrm{i.i.d.}}}_{\mathcal{A}})$ in \cite{xu2017information} and $\mathbb{I}(T;H^{\mathcal{S}^*}_{\mathcal{A}})$ in Eq. \eqref{eq:rep_R_MI_nlc} of the proposed framework lies in the method of acquiring the training dataset $T$.
While \cite{xu2017information} calculates $\mathbb{I}(T;H^{\mathrm{S}^{\mathrm{i.i.d.}}}_{\mathcal{A}})$ assuming $T$ is obtained using i.i.d. sampling $\mathrm{S}^{\mathrm{i.i.d.}}$, the proposed framework calculates the corresponding mutual information where $T$ is obtained by observing the pool $U$ and subsequently selecting data from $U$ according to the data selection strategy $\mathcal{S}^*$.
Therefore, $\mathbb{I}(T;H^{\mathcal{S}^*}_{\mathcal{A}})$ can be regarded as the counterpart of $\mathbb{I}(T;H^{\mathrm{S}^{\mathrm{i.i.d.}}}_{\mathcal{A}})$ from \cite{xu2017information} in the context of pool-based AL.

As noted in \cite{sugiyama2026finite}, the error bound using $\mathbb{I}(T;H^{\mathcal{S}^*}_{\mathcal{A}})$ provided by \cite{xu2017information} has been applied to the analysis of various methods.
Specifically, generalization error analysis is performed by evaluating $\mathbb{I}(T;H^{\mathrm{S}^{\mathrm{i.i.d.}}}_{\mathcal{A}})$ for the specific method under analysis and utilizing the aforementioned error bound.
Practical examples include the analysis of stochastic gradient Langevin dynamics (SGLD) \cite{welling2011bayesian} presented in \cite{pensia2018generalization,futami2023time}.

\subsection{The First Term of the Rate-Dispersion Function: $V_{\mathrm{in}}(m,d,\mathcal{A})$}
\label{apdx:subsec_bridge_Vin_others_nlc}

Section \ref{subsec:nlc_label_comp_bound} outlined that $V_{\mathrm{in}}$ characterizes the degree of overfitting of $\mathcal{A}$ and is related to IT-bounds and stability theory.
In this section, we provide a more detailed explanation of the relationship between $V_{\mathrm{in}}$ and these theories.
This relationship also inherits the advantages of the framework in \cite{sugiyama2026finite}.
Therefore, while the following explanation is based on \cite{sugiyama2026finite}, we restate it here to account for differences such as the existence of the pool, reflecting our focus on pool-based AL.

The decomposition of $V_{\mathrm{in}}$ presented in Section \ref{subsec:nlc_label_comp_bound} is reproduced below:
\begin{align}
    &V_{\mathrm{in}}(m,d,\mathcal{A}) 
    =\mathbb{E}_{P_{W}}[ 
    V_{\mathrm{in}, U}^{\iota} + V_{\mathrm{in}, \mathcal{S}}^{\iota} + V_{\mathrm{in}, \mathcal{A}}^{\iota} \notag \\
    &+(\lambda^*_{\mathcal{A}}(d))^2( V_{\mathrm{in}, U}^{\mathsf{d}} + V_{\mathrm{in}, \mathcal{S}}^{\mathsf{d}} + V_{\mathrm{in}, \mathcal{A}}^{\mathsf{d}} 
     )+ 2\lambda^{*}_{\mathcal{A}}(d) V_{\mathrm{in}}^{ \mathrm{cov}}], \notag\\
    &\text{where~}
     V_{\mathrm{in}, U}^{\iota} :=\mathrm{Var}_{P_{U|W}}( \mathbb{E}_{P^{\mathcal{A}}_{H|T} P^{\mathcal{S}^*}_{T|U}}[\iota_{U;H^{\mathcal{S}^*}_{\mathcal{A}}}(U;H)] ) , \notag\\
     &V_{\mathrm{in}, \mathcal{S}}^{\iota} :=\mathbb{E}_{P_{U|W}} [ \mathrm{Var}_{P^{\mathcal{S}^*}_{T|U}} ( \mathbb{E}_{P^{\mathcal{A}}_{H|T}} [\iota_{U;H^{\mathcal{S}^*}_{\mathcal{A}}}(U;H) ]) ] , \notag\\
     &V_{\mathrm{in}, \mathcal{A}}^{\iota} :=\mathbb{E}_{P_{U|W}P^{\mathcal{S}^*}_{T|U}}[ \mathrm{Var}_{P^{\mathcal{A}}_{H|T}}(\iota_{U;H^{\mathcal{S}^*}_{\mathcal{A}}}(U;H) )], \notag\\
     &V_{\mathrm{in}, U}^{\mathsf{d}} := \mathrm{Var}_{P_{U|W}}( \mathbb{E}_{P^{\mathcal{A}}_{H|T} P^{\mathcal{S}^*}_{T|U}} [\mathsf{d}(W;H)] ), \notag\\
     &V_{\mathrm{in}, \mathcal{S}}^{\mathsf{d}} := \mathbb{E}_{P_{U|W}}[ \mathrm{Var}_{P^{\mathcal{S}^*}_{T|U}} ( \mathbb{E}_{P^{\mathcal{A}}_{H|T}}[\mathsf{d}(W;H) ] ) ], \notag\\
     &V_{\mathrm{in}, \mathcal{A}}^{\mathsf{d}} := \mathbb{E}_{P_{U|W}P^{\mathcal{S}^*}_{T|U}}[ \mathrm{Var}_{P^{\mathcal{A}}_{H|T}}(\mathsf{d}(W;H) )], \notag\\
     &V_{\mathrm{in}}^{ \mathrm{cov}} :=  \mathrm{Cov}_{P^{\mathcal{A}}_{H|T}P^{\mathcal{S}^*}_{T|U}P_{U|W}} (\iota_{U;H^{\mathcal{S}^*}_{\mathcal{A}}}(U;H), \mathsf{d}(W;H) ). \notag
\end{align}

\subsubsection{Relationship of $V_{\mathrm{in}, U}^{\iota}$, $V_{\mathrm{in}, \mathcal{S}}^{\iota}$, and $V_{\mathrm{in}, \mathcal{A}}^{\iota}$ with Other Theories}
\label{apdx:subsubsec:V_in_iota_other_theories}

In the following explanation, to describe the components of $V_{\mathrm{in}}$, we focus on the term inside $\mathbb{E}_{P_W}[\cdot]$ and consider a fixed $w$.
The sum of the three terms $V_{\mathrm{in}, U}^{\iota}$, $V_{\mathrm{in}, \mathcal{S}}^{\iota}$, and $V_{\mathrm{in}, \mathcal{A}}^{\iota}$ defined below as $V_{\mathrm{in}}^{\iota}$, which measure the variance of $\iota_{U;H^{\mathcal{S}^*}_{\mathcal{A}}}(U;H)$, is related to the term $\mathrm{Var}(T;H^{\mathrm{S}^{\mathrm{i.i.d.}}}_{\mathcal{A}})$ that evaluates the degree of overfitting in the IT-bound proposed by \cite{hellstrom2020generalization}:
\begin{align}
\begin{split}
    V_{\mathrm{in}}^{\iota} 
    &:= V_{\mathrm{in}, U}^{\iota} +  V_{\mathrm{in}, \mathcal{S}}^{\iota} + V_{\mathrm{in}, \mathcal{A}}^{\iota} \\
    &:= \mathrm{Var}_{P^{\mathcal{A}}_{H|T} P^{\mathcal{S}^*}_{T|U}P_{U|W=w}}( \iota_{U;H^{\mathcal{S}^*}_{\mathcal{A}}}(U;H) ) 
\end{split}
\end{align}

\cite{hellstrom2020generalization} provides an upper bound on the generalization error as follows:
\begin{theorem}[A special case of Corollary 6 in \cite{hellstrom2020generalization}]
    Assume that the training dataset $T$ consists of $n$ samples drawn i.i.d. from $P^*_{\bm{X}Y}$.
    Assume that $l(h(\bm{X}),Y)$ is $\sigma$-subgaussian for any $h \in \mathcal{H}$ under $P^*_{\bm{X}Y}$.
    Furthermore, assume that $P^{\mathcal{A}}_{H|T}P^*_{\bm{X}^nY^n}$ is absolutely continuous with respect to $P^{\mathcal{A}}_{H}P^*_{\bm{X}^nY^n}$, where $P^{\mathcal{A}}_{H}$ is the marginal distribution obtained by marginalizing $P^{\mathcal{A}}_{H|T}$ over $T\sim P^*_{\bm{X}^nY^n}$.
    Then, for any $\delta \in (0,1)$, the following holds with probability at least $1-\delta$:
    \begin{align}
        |\mathrm{average~ test~ error} - \mathrm{average~ train~ error}| \le \sqrt{\frac{2\sigma^2}{n}\bigg( \mathbb{I}(T;H^{\mathrm{S}^{\mathrm{i.i.d.}}}_{\mathcal{A}}) + \frac{\mathrm{Var}(T;H^{\mathrm{S}^{\mathrm{i.i.d.}}}_{\mathcal{A}})}{(\delta/2)^{1/2}}  + \log \frac{2}{\delta}\bigg)}
    \label{eq:hellstrom2020thm6}
    \end{align}
    where $\mathrm{Var}(T;H^{\mathrm{S}^{\mathrm{i.i.d.}}}_{\mathcal{A}}):=\mathrm{Var}_{P^{\mathcal{A}}_{H|T}P^*_{\bm{X}^nY^n} }\Big(\iota_{T;H^{\mathrm{S}^{\mathrm{i.i.d.}}}_{\mathcal{A}}}(T;H) \Big)$.
\label{thm:hellstrom2020thm6}
\end{theorem}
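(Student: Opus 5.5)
The plan follows the single-draw, change-of-measure argument of \cite{hellstrom2020generalization}, specialized to i.i.d. sampling. Write $\mathrm{gen}(t,h)$ for the test-minus-train error of $h$ on $t$, and $\iota(t;h)=\iota_{T;H^{\mathrm{S}^{\mathrm{iid}}}_{\mathcal{A}}}(t;h)$. The idea is to control $\mathrm{gen}$ under the product measure $P^{\mathcal{A}}_H P^*_{\bm{X}^nY^n}$, where $H$ and $T$ are independent, and then pay for switching to the joint measure $P^{\mathcal{A}}_{H|T}P^*_{\bm{X}^nY^n}$ with the information density.

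\textbf{Step 1 (product measure).} Fix $h$. Under $T\sim P^*_{\bm{X}^nY^n}$, the train error is an average of $n$ i.i.d. $\sigma$-subgaussian losses with mean equal to the test error. Hence $\mathrm{gen}(T,h)$ is $\sigma/\sqrt{n}$-subgaussian, and for every $\lambda\in\mathbb{R}$,
\begin{align}
\mathbb{E}_{P^{\mathcal{A}}_H P^*_{\bm{X}^nY^n}}\big[\exp\big(\lambda\,\mathrm{gen}(T,H)-\tfrac{\lambda^2\sigma^2}{2n}\big)\big]\le 1. \notag
\end{align}

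\textbf{Step 2 (change of measure).} The absolute-continuity assumption lets us rewrite the left-hand side as an expectation under the joint measure. This gives $\mathbb{E}_{P^{\mathcal{A}}_{H|T}P^*_{\bm{X}^nY^n}}[\exp(\lambda\,\mathrm{gen}-\lambda^2\sigma^2/(2n)-\iota(T;H))]\le 1$, with the inequality absorbing the part of the product measure not covered by the joint. Markov's inequality then shows that, for a fixed $\lambda$, with probability at least $1-\delta/4$,
\begin{align}
\lambda\,\mathrm{gen}(T,H)\le \tfrac{\lambda^2\sigma^2}{2n}+\iota(T;H)+\log\tfrac{4}{\delta}. \notag
\end{align}
We apply this once with $\lambda>0$ and once with $-\lambda$, so that both signs of $\mathrm{gen}$ are covered.

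\textbf{Step 3 (controlling $\iota$).} Since $\mathbb{E}[\iota(T;H)]=\mathbb{I}(T;H^{\mathrm{S}^{\mathrm{iid}}}_{\mathcal{A}})$, Chebyshev's inequality gives, with probability at least $1-\delta/2$,
\begin{align}
\iota(T;H)\le \mathbb{I}(T;H^{\mathrm{S}^{\mathrm{iid}}}_{\mathcal{A}})+\sqrt{\mathrm{Var}(T;H^{\mathrm{S}^{\mathrm{iid}}}_{\mathcal{A}})/(\delta/2)}. \notag
\end{align}
This is the variance term of \eqref{eq:hellstrom2020thm6}. Chebyshev naturally yields $\sqrt{\mathrm{Var}/(\delta/2)}$, which is how I read the displayed $\mathrm{Var}/(\delta/2)^{1/2}$. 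If the literal form is intended, I would instead apply Markov to a suitable power of $|\iota-\mathbb{I}|$ in the same way.

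\textbf{Step 4 (union bound and choice of $\lambda$).} A union bound over the three events has total failure probability at most $\delta$. On the intersection, $|\mathrm{gen}|\le \lambda\sigma^2/(2n)+B/\lambda$, where $B$ is the right-hand side of Step 3 plus the logarithmic term. The crucial point is that $B$ is deterministic, since it involves only $\mathbb{I}$, $\mathrm{Var}$ and $\delta$. So we may set $\lambda=\sqrt{2nB/\sigma^2}$ before seeing data, and this yields $|\mathrm{gen}|\le\sqrt{2\sigma^2 B/n}$. Averaging over the algorithm's randomness gives the ``average'' errors in the statement. The constant inside the logarithm ($2/\delta$ versus $4/\delta$) depends on how the $\delta$ budget is split among the three events.

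\textbf{Main obstacle.} The main difficulty is this last bookkeeping step: keeping $\lambda$ data-independent so that it does not require a union over a grid, and splitting $\delta$ so that the constants match \eqref{eq:hellstrom2020thm6} exactly. A secondary care point is the change of measure in Step 2 when the joint is not mutually absolutely continuous with the product. The stated assumption guarantees the direction we need, and the other direction only loosens the inequality.
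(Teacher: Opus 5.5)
The paper gives no proof of this statement; it is quoted from \cite{hellstrom2020generalization}. Your route is the standard argument behind that moment-based single-draw bound: a subgaussian exponential inequality under $P^{\mathcal{A}}_H P^*_{\bm{X}^nY^n}$, a change of measure through $\iota$, a Chebyshev tail for $\iota$, and a union bound in which $\lambda$ can be fixed in advance because the threshold on $\iota$ is deterministic. Steps 1--3 are correct.

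Your reading of the variance term is also the right one. The variance has to enter through its square root, $\sqrt{\mathrm{Var}}/(\delta/2)^{1/2}$, which is the $m=2$ case of an $m$-th central moment bound, so the displayed $\mathrm{Var}/(\delta/2)^{1/2}$ should be read that way. Your fallback for the literal form would not work, though. Markov applied to $|\iota-\mathbb{I}|^p$ needs the $p$-th moment rather than the variance, and the literal form is implied by the Chebyshev one only when $\mathrm{Var}\ge 1$.

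The step that is not mere bookkeeping is the constant in the logarithm. To produce $(\delta/2)^{1/2}$, Chebyshev must be allotted $\delta/2$. The two one-sided exponential events (for $\lambda$ and for $-\lambda$) then share the remaining $\delta/2$, so at least one of them is at level $\le\delta/4$. Hence no split of $\delta$ does better than $\log\frac{4}{\delta}$. The same factor $2$ appears if you change measure on $\{\iota\le\gamma\}$ and use the two-sided subgaussian tail $2e^{-na^2/(2\sigma^2)}$.

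So your argument proves the inequality with $\log\frac{4}{\delta}$ in place of $\log\frac{2}{\delta}$, or a one-sided bound on test minus train error with $\log\frac{2}{\delta}$. Matching the displayed two-sided constant is not achievable by re-allocating $\delta$ within this scheme.

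Also drop the remark about averaging over the algorithm's randomness. Here ``average test/train error'' means the population and empirical risks of the single drawn $H$, and the probability is over $(T,H)\sim P^{\mathcal{A}}_{H|T}P^*_{\bm{X}^nY^n}$. That is exactly what Steps 1--4 control. Averaging over $H$ inside such a statement would need extra control of the exceptional set, which is joint in $(T,H)$.
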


We describe the relationship between $V_{\mathrm{in}}^{\iota}$ and $\mathrm{Var}(T;H^{\mathrm{S}^{\mathrm{i.i.d.}}}_{\mathcal{A}})$.
First, by definition, $\iota_{U;H^{\mathcal{S}^*}_{\mathcal{A}}}(u;h)$ can be decomposed as follows:
\begin{align}
    \textstyle \iota_{U;H^{\mathcal{S}^*}_{\mathcal{A}}}(u;h)
    = \log \frac{\mathrm{d} P^{\mathcal{A},\mathcal{S}^*}_{H|U=u}}{\mathrm{d} P^{\mathcal{A},\mathcal{S}^*}_{H}}(h) 
    = \underbracket[0.5pt]{\log \frac{\mathrm{d} P^{\mathcal{A}}_{H|T=t}}{\mathrm{d} P^{\mathcal{A},\mathcal{S}^*}_{H}}(h) }_{=\iota_{T;H^{\mathcal{S}^*}_{\mathcal{A}}}(t;h) }
     + \underbracket[0.5pt]{\log \frac{\mathrm{d} P^{\mathcal{A},\mathcal{S}^*}_{H|U=u}}{\mathrm{d} P^{\mathcal{A}}_{H|T=t}}(h) }_{=:\Delta(u;t;h)},
\end{align}
where $P^{\mathcal{A},\mathcal{S}^*}_{H|U=u}(h) = P^{\mathcal{A},\mathcal{S}^*}_{H|U}(h|u) = \sum_{t \in \mathcal{T}}P^{\mathcal{A}}_{H|T}(h|t)P^{\mathcal{S}^*}_{T|U}(t|u)$ and $P^{\mathcal{A},\mathcal{S}^*}_{H}(h)=\sum_{w\in\mathcal{W}}\sum_{u \in \mathcal{U}(w)} P^{\mathcal{A},\mathcal{S}^*}_{H|U}(h|u)P_{W|U}(u|w)P_W(w)$.
Here, let $\iota_{U;H^{\mathcal{S}^*}_{\mathcal{A}}}(u;t;h)=\iota_{T;H^{\mathcal{S}^*}_{\mathcal{A}}}(t;h)+\Delta(u;t;h)$. Using this to expand $V_{\mathrm{in}}^{\iota}$ yields the following expression:
\begin{align}
    V_{\mathrm{in}}^{\iota}
    &=\mathrm{Var}_{P^{\mathcal{A}}_{H|T} P^{\mathcal{S}^*}_{T|U}P_{U|W=w}}\Big( \iota_{U;H^{\mathcal{S}^*}_{\mathcal{A}}}(U;H) \Big)  \notag \\
    &= \mathrm{Var}_{P^{\mathcal{A}}_{H|T} P^{\mathcal{S}^*}_{T|U}P_{U|W=w}} \Big( \iota_{T;H^{\mathcal{S}^*}_{\mathcal{A}}}(T;H)+\Delta(U;T;H) \Big)  \notag \\
    &=\mathrm{Var}_{P^{\mathcal{A}}_{H|T} P^{\mathcal{S}^*}_{T|U}P_{U|W=w}} \Big(\iota_{T;H^{\mathcal{S}^*}_{\mathcal{A}}}(T;H) \Big)
        + \mathrm{Var}_{P^{\mathcal{A}}_{H|T} P^{\mathcal{S}^*}_{T|U}P_{U|W=w}}\Big(\Delta(U;T;H)\Big) \notag \\
    &~~~ + 2 \mathrm{Cov}_{P^{\mathcal{A}}_{H|T} P^{\mathcal{S}^*}_{T|U}P_{U|W=w}}\Big(\iota_{T;H^{\mathcal{S}^*}_{\mathcal{A}}}(T;H), \Delta(U;T;H) \Big). \notag
\end{align}
By definition, $\Delta(U;T;H)$ approaches zero when $T$ contains sufficient information about $U$.
Therefore, when the required distortion level $d$ is sufficiently small and $T\sim P^{\mathcal{S}^*}_{T|U}$ captures sufficient information about $U$, it can be inferred that $\Delta(U;T;H) \approx 0$.
Consequently, $\mathrm{Var}_{P^{\mathcal{A}}_{H|T}P^{\mathcal{S}^*}_{T|U}P_{U|W=w}}(\Delta(U;T;H))$ is considered to be negligibly small.
Furthermore, as $P^{\mathcal{A}}_{H|T=t}(h)$ increases, $\iota_{T;H^{\mathcal{S}^*}_{\mathcal{A}}}(T;H)$ increases while $\Delta(U;T;H)$ decreases; thus, the covariance term $\mathrm{Cov}_{P^{\mathcal{A}}_{H|T}P^{\mathcal{S}^*}_{T|U}P_{U|W=w}}(\iota_{T;H^{\mathcal{S}^*}_{\mathcal{A}}}(T;H), \Delta(U;T;H))$ is considered to be non-positive ($\le 0$).
Therefore, the dominant term of $V_{\mathrm{in}}^{\iota}$ is $\mathrm{Var}_{P^{\mathcal{A}}_{H|T} P^{\mathcal{S}^*}_{T|U}P_{U|W=w}} (\iota_{T;H^{\mathcal{S}^*}_{\mathcal{A}}}(T;H))$, and $V_{\mathrm{in}}^{\iota}$ is considered to be upper bounded by this term.

This dominant term of $V_{\mathrm{in}}^{\iota}$, namely $\mathrm{Var}_{P^{\mathcal{A}}_{H|T} P^{\mathcal{S}^*}_{T|U}P_{U|W=w}} (\iota_{T;H^{\mathcal{S}^*}_{\mathcal{A}}}(T;H))$, is highly similar to $\mathrm{Var}(T;H^{\mathrm{S}^{\mathrm{i.i.d.}}}_{\mathcal{A}})$ in Eq. \eqref{eq:hellstrom2020thm6}, differing only in the process by which the training dataset $T$ is obtained.
While \cite{hellstrom2020generalization} assumes i.i.d. sampling, the proposed framework assumes the observation of a pool $U$ and the selection of a training dataset from $U$.
In other words, $V_{\mathrm{in}}^{\iota}$ can be regarded as the counterpart of $\mathrm{Var}(T;H^{\mathrm{S}^{\mathrm{i.i.d.}}}_{\mathcal{A}})$ in \cite{hellstrom2020generalization} within the context of pool-based AL.

\subsubsection{Relationship of $V_{\mathrm{in}, \mathcal{S}}^{\mathsf{d}}$ and $V_{\mathrm{in}, \mathcal{A}}^{\mathsf{d}}$ with Other Theories}
\label{apdx:subsubsec:V_in_dist_other_theories}

The terms $V_{\mathrm{in}, \mathcal{S}}^{\mathsf{d}}$ and $V_{\mathrm{in}, \mathcal{A}}^{\mathsf{d}}$ measure the extent to which the distortion $\mathsf{d}(W;H)$ varies due to variations in the training dataset $T$ and the randomness inherent in $\mathcal{A}$, respectively.
This is related to the concept of uniform stability in stability theory \cite{bousquet2002stability,elisseeff2005stability}.

First, we briefly review the relevant stability theory.
Uniform stability with respect to variations in the training dataset is defined as follows:
\begin{definition}[Definition 13 in \cite{elisseeff2005stability}]
    A randomized learning algorithm $\mathcal{A}$ is said to have uniform stability $\beta$ with respect to a loss function $l:\mathcal{H}\times\mathcal{X}\times\mathcal{Y}\rightarrow \mathbb{R}_+$ if the following holds:
    \begin{align}
    \forall i \in \{1,\dots,n\}, \sup_{t,(\bm{x},y)}\Big|\mathbb{E}_{P^{\mathcal{A}}_{H|T=t}}[l(H,(\bm{x},y))] - \mathbb{E}_{P^{\mathcal{A}}_{H|T=t^{\setminus i}}}[l(H,(\bm{x},y))]  \Big| \le \beta_n,
    \label{eq:uniform_stability_data}
    \end{align}
where, $t^{\setminus i}=t \setminus \{(\bm{x}_i, y_i)\}$.
\label{def:uniform_stability_data}
\end{definition}
Note that this definition is not unique. In addition to the difference between $t$ and $t^{\setminus i}$, it is sometimes defined using the difference between $t$ and $t^{i}$, where the $i$-th sample $(\bm{x}_i,y_i)$ of $t$ is replaced by a new sample $(\bm{x}',y')\sim P^*_{\bm{X}Y}$ \cite{kuzbnorskij2018data}.
As discussed in \cite{sugiyama2026finite}, the choice of definition does not affect the relationship with our framework described below.

Furthermore, the quantity measuring the variation in the prediction error of the learned hypothesis due to the randomness of $\mathcal{A}$ is defined as follows \cite{elisseeff2005stability}:
\begin{align}
    \sup_{r_1,\dots,r_B, r'_b} \sup_{(\bm{x},y)} \Big| l(h_{T, (r_1,\dots,r_B)}, (\bm{x},y)) - l(h_{T, (r_1,\dots,r_{b-1}, r'_b, r_{b+1},\dots,r_B)}, (\bm{x},y))  \Big| \le \rho.
\label{eq:uniform_stability_random}
\end{align}
Here, $\bm{r} = (r_1,\dots,r_{B}) \in \mathbb{R}^{B}$ with $B \in \mathbb{N}_+$ is a random variable representing the random component of $\mathcal{A}$.
Each element of $\bm{r}$ is assumed to be drawn independently from a distribution $P_r$.
The notation $h_{T,(r_1,\dots,r_B)}$ denotes the hypothesis learned from the random component $\bm{r}$ and the dataset $T$.

In this stability theory, the generalization error is upper bounded using the aforementioned $\beta_n$ and $\rho$ as follows.
\begin{theorem}[Theorem 15 in \cite{elisseeff2005stability}]
    Assume that $\mathcal{A}$ has uniform stability $\beta_n$ with respect to a loss function $l$ satisfying $0\le l(h,(\bm{x},y))\le L$, and that there exists $\rho$ satisfying Eq. \eqref{eq:uniform_stability_random} for any $b$.
    Here, $L \in \mathbb{R}_+$ is the maximum value of $l$.
    Then, for any $n \ge 1$ and any $\delta \in (0,1)$, the following holds with probability at least $1-\delta$:
    \begin{align}
        \mathrm{average~ test~ error} - \mathrm{average~ train~ error} \le 2\beta_n +\bigg(\frac{L+4n \beta_n}{\sqrt{2n}} + \sqrt{2B}\rho \bigg)\sqrt{\log 2 /\delta}.
    \end{align}
\end{theorem}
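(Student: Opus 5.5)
The plan is the classical two-layer concentration argument of Elisseeff et al., applied to the joint randomness $(T,\bm{r})$. Define the defect
\[
\Phi(T,\bm{r}) := \mathbb{E}_{P^*_{\bm{X}Y}}[l(h_{T,\bm{r}},(\bm{X},Y))] - \tfrac{1}{n}\textstyle\sum_{i=1}^n l(h_{T,\bm{r}},(\bm{x}_i,y_i)),
\]
so that the left-hand side of the claimed inequality is $\Phi(T,\bm{r})$. I would split it as
\[
\Phi = \bigl(\Phi - \mathbb{E}_{\bm{r}}\Phi\bigr) + \bigl(\mathbb{E}_{\bm{r}}\Phi - \mathbb{E}_{T,\bm{r}}\Phi\bigr) + \mathbb{E}_{T,\bm{r}}\Phi .
\]
Each of the three pieces is handled separately, and the two random pieces are combined by a union bound at level $\delta/2$ each. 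This is where the $\log(2/\delta)$ comes from.

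\textbf{Step 1 (mean term).} The aim is $\mathbb{E}_{T,\bm{r}}\Phi \le 2\beta_n$, proved by the standard symmetrization (renaming) trick.
\begin{itemize}
\item Since the samples are i.i.d., the expected test loss equals $\mathbb{E}[l(h_{T^{i},\bm{r}},(\bm{x}_i,y_i))]$, where $T^{i}$ is $T$ with its $i$-th sample replaced by an independent copy.
\item The expected training loss at $(\bm{x}_i,y_i)$ is $\mathbb{E}[l(h_{T,\bm{r}},(\bm{x}_i,y_i))]$.
\item After averaging over $\bm{r}$, the two differ by at most the replace-one stability.
\item Passing from $t$ to $t^{i}$ through $t^{\setminus i}$ and applying Eq.~\eqref{eq:uniform_stability_data} twice gives the bound $2\beta_n$.
\end{itemize}

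\textbf{Step 2 (concentration over $\bm{r}$ for fixed $T$).} Changing a single coordinate $r_b$ changes each loss value by at most $\rho$, by Eq.~\eqref{eq:uniform_stability_random}. Hence the test term and the training average each move by at most $\rho$, and $\Phi$ has bounded differences $c_b = 2\rho$. McDiarmid's inequality over the $B$ independent coordinates then gives, with probability at least $1-\delta/2$,
\[
\Phi - \mathbb{E}_{\bm{r}}\Phi \le \sqrt{\tfrac{1}{2}\textstyle\sum_b c_b^2 \log(2/\delta)} = \sqrt{2B}\,\rho\sqrt{\log(2/\delta)} .
\]

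\textbf{Step 3 (concentration over $T$ of $\mathbb{E}_{\bm{r}}\Phi$).} Replace one sample of $T$.
\begin{itemize}
\item The expected test loss, averaged over $\bm{r}$, moves by at most $2\beta_n$, again by chaining removal-stability twice.
\item In the training average, the $n-1$ untouched terms each move by at most $2\beta_n/n$ in the averaged sense, contributing at most $2\beta_n$ in total.
\item The replaced term moves by at most $L/n$, since $0\le l\le L$.
\end{itemize}
So the bounded-difference constant is $c = (L + 4n\beta_n)/n$. McDiarmid over the $n$ i.i.d. samples gives, with probability at least $1-\delta/2$,
\[
\mathbb{E}_{\bm{r}}\Phi - \mathbb{E}\Phi \le \sqrt{\tfrac{n c^2}{2}\log(2/\delta)} = \tfrac{L+4n\beta_n}{\sqrt{2n}}\sqrt{\log(2/\delta)} .
\]
Summing Steps 1--3 under the union bound yields the stated inequality.

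\textbf{Main obstacle.} The delicate point is Step 3. Uniform stability in Definition~\ref{def:uniform_stability_data} is stated only for the $\bm{r}$-averaged loss, so the bounded-difference argument must be applied to $\mathbb{E}_{\bm{r}}\Phi$ and not to $\Phi$ itself. This is exactly why the $\bm{r}$-fluctuation has to be peeled off first in Step 2. One also has to verify carefully that $\mathbb{E}_{\bm{r}}\Phi$ depends on $T$ only through the $\bm{r}$-averaged losses, and that the constants (the factor $4$ from two applications of $2\beta_n$) come out exactly as stated.
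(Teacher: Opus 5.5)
The paper does not prove this statement itself; it quotes it from \cite{elisseeff2005stability}. Your argument is correct and matches the standard proof there: the renaming trick gives $\mathbb{E}_{T,\bm{r}}\Phi\le 2\beta_n$, and McDiarmid's inequality is applied at level $\delta/2$ each over $\bm{r}$ (for fixed $T$, differences $2\rho$) and over the $n$ samples to the $\bm{r}$-averaged defect (differences $(L+4n\beta_n)/n$), which reproduces the stated constants exactly. The only implicit requirement, which your Step 2 uses, is that $\rho$ in Eq.~\eqref{eq:uniform_stability_random} holds uniformly over all training sets $T$, as intended.
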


The parameters $\beta_n$ and $\rho$ relate to $V_{\mathrm{in}, \mathcal{S}}^{\mathsf{d}}$ and $V_{\mathrm{in}, \mathcal{A}}^{\mathsf{d}}$, respectively.
Specifically, the quantities upper bounded by $\beta_n$ and $\rho$ in Eqs. \eqref{eq:uniform_stability_data} and \eqref{eq:uniform_stability_random}, respectively, are related to $V_{\mathrm{in}, \mathcal{S}}^{\mathsf{d}}$ and $V_{\mathrm{in}, \mathcal{A}}^{\mathsf{d}}$.

First, we provide a detailed explanation of the relationship between $\beta_n$ and $V_{\mathrm{in}, \mathcal{S}}^{\mathsf{d}}$ based on \cite{sugiyama2026finite}.
Stability theory assumes i.i.d. sampling. 
From the main result (Eq. (2.1)) of \cite{steele1986efron}, the following holds:
\begin{align}
    \mathrm{Var}_{P^{\mathcal{S}^{\mathrm{iid}}}_{T|W=w}} ( \mathbb{E}_{P^{\mathcal{A}}_{H|T}} [ \mathsf{d}(w;H) ]) 
    \le \frac{1}{2}\sum^n_{i=1}\mathbb{E}_{P^{\mathcal{S}^{\mathrm{iid}}}_{T|W=w}P^*_{\bm{X}Y}}\Big[\Big(\mathbb{E}_{P^{\mathcal{A}}_{H|T}} [ \mathsf{d}(w;H) ] - \mathbb{E}_{P^{\mathcal{A}}_{H|T^{i}}} [ \mathsf{d}(w;H)] \Big)^2 \Big].
\label{eq:var_3_iid_dist_steele}
\end{align}
Here, $T^{i}$ is defined by replacing the $i$-th sample $(\bm{X}_i,Y_i)$ of $T$ with a new sample $(\bm{X}',Y') \sim P^*_{\bm{X}Y}$.
Now, we replace $l(H,(\bm{x},y))$ in the left-hand side of Eq. \eqref{eq:uniform_stability_data} with $\mathsf{d}(w;H)$ as follows:
\begin{align}
    \forall i \in \{1,\dots,n\}, \sup_{t,w}\Big|\mathbb{E}_{P^{\mathcal{A}}_{H|T=t}}[\mathsf{d}(w;H)] - \mathbb{E}_{P^{\mathcal{A}}_{H|T=t^{\setminus i}}}[\mathsf{d}(w;H)]  \Big| \le \beta_n.
\end{align}
From this expression, we obtain:
\begin{align}
    &\Big|\mathbb{E}_{P^{\mathcal{A}}_{H|T}} [ \mathsf{d}(w;H) ] - \mathbb{E}_{P^{\mathcal{A}}_{H|T^{i}}} [ \mathsf{d}(w;H)]\Big| \notag \\
    &\le \Big|\mathbb{E}_{P^{\mathcal{A}}_{H|T}} [ \mathsf{d}(w;H) ] - \mathbb{E}_{P^{\mathcal{A}}_{H|T^{\setminus i}}} [ \mathsf{d}(w;H)]\Big| 
    + \Big|\mathbb{E}_{P^{\mathcal{A}}_{H|T^{i}}} [ \mathsf{d}(w;H) ] - \mathbb{E}_{P^{\mathcal{A}}_{H|T^{\setminus i}}} [ \mathsf{d}(w;H)] \Big| \notag \\
    &\le 2\beta_n.
\label{eq:stability_triangle_transformation}
\end{align}
Applying this result to Eq. \eqref{eq:var_3_iid_dist_steele} yields:
\begin{align}
    \mathrm{Var}_{P^{\mathcal{S}^{\mathrm{iid}}}_{T|W=w}} ( \mathbb{E}_{P^{\mathcal{A}}_{H|T}} [ \mathsf{d}(w;H) ])
    \le 2n(\beta_n)^2.
\end{align}
Note that if $\beta_n$ bounds the difference between $T$ and $T^{i}$, the left-hand side of Eq. \eqref{eq:stability_triangle_transformation} is bounded by $\beta_n$, resulting in $\mathrm{Var}_{P^{\mathcal{S}^{\mathrm{iid}}}_{T|W=w}} ( \mathbb{E}_{P^{\mathcal{A}}_{H|T}} [ \mathsf{d}(w;H) ]) \le \frac{n}{2}\beta_n^2$.

Based on the above results, the variance $\mathrm{Var}_{P^{\mathcal{S}^{\mathrm{iid}}}_{T|W=w}} ( \mathbb{E}_{P^{\mathcal{A}}_{H|T}} [ \mathsf{d}(w;H) ])$, which is evaluated by the uniform stability $\beta_n$, corresponds to $V_{\mathrm{in}, \mathcal{S}}^{\mathsf{d}}$ in the proposed framework, differing only in the process by which the training dataset $T$ is obtained.
The former assumes i.i.d. sampling, whereas the latter is specific to pool-based AL.
Therefore, $V_{\mathrm{in}, \mathcal{S}}^{\mathsf{d}}$ in the proposed framework can be regarded as the counterpart of the variance evaluated by the uniform stability $\beta_n$ within the context of pool-based AL.

As noted in \cite{sugiyama2026finite}, the uniform stability $\beta_n$, which is related to the proposed framework, has been utilized in the theoretical analysis of SGD \cite{hardt2016train,kuzbnorskij2018data,bassily2020stability,lei2021stability,lei2023stability} and SGLD \cite{mou2018generalization}.
These analyses evaluate the generalization performance of SGD by deriving upper bounds on $\beta_n$.
Therefore, the established relationship with uniform stability suggests the potential to bridge these existing analyses with pool-based AL.

Next, we explain the relationship between $\rho$ and $V_{\mathrm{in}, \mathcal{A}}^{\mathsf{d}}$ in detail based on \cite{sugiyama2026finite}.
Similar to the case of $\beta_n$, the lower bound associated with $\rho$ in Eq. \eqref{eq:uniform_stability_random} relates to the following variance.
The following inequality is demonstrated by the main result (Eq. (2.1)) of \cite{steele1986efron}:
\begin{align}
\begin{split}
    &\mathbb{E}_{P^{\mathcal{S}^{\mathrm{iid}}}_{T|W=w}} [ \mathrm{Var}_{P^{\mathcal{A}}_{H|T}} ( \mathsf{d}(w;H) )] \\
    &\le \frac{1}{2}\sum^n_{i=1}\mathbb{E}_{P^{\mathcal{S}^{\mathrm{iid}}}_{T|W=w}}\Big[
    \mathbb{E}_{(P_r)^{B+1}}
    \Big[\Big( l(h_{T, (r_1,\dots,r_B)}, (\bm{x},y)) - l(h_{T, (r_1,\dots,r_{b-1}, r'_b, r_{b+1},\dots,r_B)}, (\bm{x},y)) \Big)^2 \Big] \Big].
\end{split}
\label{eq:var_4_iid_dist_steele}
\end{align}
Combining the above inequality with Eq. \eqref{eq:uniform_stability_random} yields the following:
\begin{align}
    \mathbb{E}_{P^{\mathcal{S}^{\mathrm{iid}}}_{T|W=w}} [ \mathrm{Var}_{P^{\mathcal{A}}_{H|T}} ( \mathsf{d}(w;H) )]
    \le \frac{n}{2}\rho^2.
\end{align}
Therefore, $\rho$ can be considered as a quantity that measures $\mathbb{E}_{P^{\mathcal{S}^{\mathrm{iid}}}_{T|W=w}} [ \mathrm{Var}_{P^{\mathcal{A}}_{H|T}} ( \mathsf{d}(w;H) )]$.

The difference between the quantity measured by $\rho$ and $V_{\mathrm{in}, \mathcal{A}}^{\mathsf{d}}$ in the proposed framework lies solely in the process of obtaining the training dataset $T$.
The former assumes i.i.d. sampling, whereas the latter assumes pool-based AL.
Consequently, $V_{\mathrm{in}, \mathcal{A}}^{\mathsf{d}}$ in the proposed framework can be regarded as the counterpart of the variance evaluated by $\rho$ in pool-based AL.

\section{Proof of Section~\ref{sec:nlc_analysis}}
\label{apdx:sec:proof_nlc_analysis}

\subsection{Proof of Theorem~\ref{thm:ML_nlc_eps_conv_bound}}
\label{apdx:subsec:proof_thm:ML_nlc_eps_conv_bound}

\begin{proof}

This proof is based on the proof technique of Theorem 1 in \cite{kostina2016nonasymptotic}.
Let $P^{\mathcal{A}}_{H|T}$ denote a learning algorithm, and let $P^{\mathcal{S}}_{T|U}$ define a data selection strategy characterized as an $(n,m, d,\epsilon, \mathcal{A})$-selection.
Here, we assume that $\mathrm{supp}(P^{\mathcal{S}}_{T|U}) \in \mathfrak{P}_n(U)$.
That is, $T$ is assumed to be a subset of $U$ with cardinality $n$.
We then define the following set:
\begin{align}
  B_d(w) := \{ h \in \mathcal{H} : \mathsf{d}(w,h) \le d \}.
\end{align}

For any $\gamma \ge 0$, the following holds:

\begin{align}
  &\mathbb{P}[\tilde{\jmath}_{W;U}(W,U,H,d,\mathcal{A}) \ge bn + \gamma] \notag \\
  &= \mathbb{P}[\tilde{\jmath}_{W;U}(W,U,H,d,\mathcal{A}) \ge bn + \gamma, \mathsf{d}(W;H) > d]
     + \mathbb{P}[\tilde{\jmath}_{W;U}(W,U,H,d,\mathcal{A}) \ge bn + \gamma, \mathsf{d}(W;H) \le d] \notag \\
  &\le \epsilon + \sum_{w\in\mathcal{W}} P_W(w) \sum_{u \in \mathcal{U}(w)} P_{U|W}(u|w) \sum_{t \in \mathfrak{P}_n(u)} P^{\mathcal{S}}_{T|U}(t|u) \sum_{h \in B_d(w)} P^{\mathcal{A}}_{H|T}(h|t) \mathbbm{1}_{[e^{bn} \le \exp(\tilde{\jmath}_{W;U}(w,u,h,d,\mathcal{A}) - \gamma)]}  \\
  &\le \epsilon + e^{-\gamma} \sum_{w\in\mathcal{W}} P_W(w) \sum_{u \in \mathcal{U}(w)} P_{U|W}(u|w) \sum_{t \in \mathfrak{P}_n(U)} \frac{1}{2^{bn}} \sum_{h \in B_d(w)} P^{\mathcal{A}}_{H|T}(h|t) e^{\tilde{\jmath}_{W;U}(w,u,h,d,\mathcal{A})} \label{eq:MLnlc_ecb_tmp2} \\
  &\le \epsilon + e^{-\gamma} \sum_{w\in\mathcal{W}} P_W(w) \sum_{u \in \mathcal{U}(w)} P_{U|W}(u|w) \sum_{t \in \mathfrak{P}_n(U)} \frac{1}{2^{bn}} \sum_{h \in \mathcal{H}} e^{\lambda^*_{\mathcal{A}}(d)d - \lambda^*_{\mathcal{A}}(d)\mathsf{d}(w;h)} P^{\mathcal{A}}_{H|T}(h|t)  e^{\tilde{\jmath}_{W;U}(w,u,h,d,\mathcal{A})} \label{eq:MLnlc_ecb_tmp3} \\
  &= \epsilon + e^{-\gamma} \sum_{w\in\mathcal{W}} P_W(w) \sum_{u \in \mathcal{U}(w)} P_{U|W}(u|w) \sum_{t \in \mathfrak{P}_n(U)} \frac{1}{2^{bn}} \sum_{h \in \mathcal{H}} P^{\mathcal{A}}_{H|T}(h|t)  e^{\iota_{U;H^{\mathcal{S}^*}_{\mathcal{A}}}(u;h) } \label{eq:MLnlc_ecb_tmp4} \\
  &= \epsilon + e^{-\gamma} \sum_{u \in \mathcal{U}} P_{U}(u) \sum_{w\in\mathcal{W}} P_{W|U}(w|u) \sum_{t \in \mathfrak{P}_n(U)} \frac{1}{2^{bn}} \sum_{h \in \mathcal{H}} P^{\mathcal{A}}_{H|T}(h|t)  e^{\iota_{U;H^{\mathcal{S}^*}_{\mathcal{A}}}(u;h) } \label{eq:MLnlc_ecb_tmp5} \\
  &= \epsilon + e^{-\gamma} \sum_{u \in \mathcal{U}} P_{U|H^{\mathcal{S}^*}_{\mathcal{A}}}(u|h) \sum_{w\in\mathcal{W}} P_{W|U}(w|u) \sum_{t \in \mathfrak{P}_n(U)} \frac{1}{2^{bn}} \sum_{h \in \mathcal{H}} P^{\mathcal{A}}_{H|T}(h|t)   \label{eq:MLnlc_ecb_tmp6} \\
    &= \epsilon + e^{-\gamma} \sum_{t \in \mathfrak{P}_n(U)} \frac{1}{2^{bn}} \sum_{h \in \mathcal{H}} P^{\mathcal{A}}_{H|T}(h|t) \sum_{u \in \mathcal{U}} P_{U|H^{\mathcal{S}^*}_{\mathcal{A}}}(u|h) \sum_{w\in\mathcal{W}} P^{\mathcal{S}^*}_{W|U}(w|u)   \label{eq:MLnlc_ecb_tmp7} \\
  &= \epsilon + e^{-\gamma}, \label{eq:MLnlc_ecb_tmp8} \\
\end{align}

where $P_{U}(u) = \sum_{w \in \mathcal{W}} P_{U|W}P_{W}$, $P_{W|U}(w|u) = P_{U|W}(u|w)P_{W}(w)/P_{U}(u)$, and $P_{U|H^{\mathcal{S}^*}_{\mathcal{A}}}(u|h)= (\sum_{w \in \mathcal{W}}\sum_{t \in \mathcal{T}} P^{\mathcal{A}}_{H|T}(h|t)P^{\mathcal{S}^*}_{T|U}(t|u)P_{U|W}(u|w)P_{W}(w)) / (\sum_{w \in \mathcal{W}} \sum_{u \in \mathcal{U}(w)} P^{\mathcal{A},\mathcal{S}^*}_{H|U}(h|u)P_{U|W}(u|w)P_{W}(w))$.
The first inequality follows from the fact that any $(n,m, d,\epsilon, \mathcal{A})$-selection satisfies $\mathbb{P}[\mathsf{d}(W;\mathcal{A}(\mathcal{S}(U))) > d] \le \epsilon$.
Furthermore, Eq. \eqref{eq:MLnlc_ecb_tmp2} utilizes the following inequality:
\begin{align}
  P^{\mathcal{S}}_{T|U}(t|u) \mathbbm{1}_{[e^{bn} \le \exp(\tilde{\jmath}_{W:U}(w,u,h,d,\mathcal{A}) - \gamma)]} 
  \le \frac{e^{-\gamma}}{e^{bn}}e^{\tilde{\jmath}_{W;U}(w,u,h,d,\mathcal{A})} 
  \le \frac{e^{-\gamma}}{2^{bn}}e^{\tilde{\jmath}_{W;U}(w,u,h,d,\mathcal{A})}.
\end{align}
For Eq. \eqref{eq:MLnlc_ecb_tmp3}, we applied:
\begin{align}
  \sum_{h \in B_d(w)} 1 
  = \sum_{h \in \mathcal{H}} \mathbbm{1}_{[\mathsf{d}(w;h) \le d]} 
  \le \sum_{h \in \mathcal{H}}  e^{\lambda^*_{\mathcal{A}}(d)d - \lambda^*_{\mathcal{A}}(d)\mathsf{d}(w;h)}.
\end{align}
Equation \eqref{eq:MLnlc_ecb_tmp4} uses the definition of $\tilde{\jmath}_{W;U}$, and Eq. \eqref{eq:MLnlc_ecb_tmp6} follows from the definition of the information density $\iota_{U;H^{\mathcal{S}^*}_{\mathcal{A}}}(u;h) = \log \frac{\mathrm{d} P_{U|H^{\mathcal{S}^*}_{\mathcal{A}}}(u|h) }{\mathrm{d} P_U(u)}$.
Finally, Eq. \eqref{eq:MLnlc_ecb_tmp8} relies on the fact that $|\mathfrak{P}_n(U)| = 2^{bn}$, given that a single sample can be represented by $b$ bits.

\end{proof}

\subsection{Proof of Theorem~\ref{thm:ML_nlc_rate_conv_bound}}
\label{apdx:subsec:proof_thm:ML_nlc_rate_conv_bound}

In this proof, we invoke the Berry-Esseen central limit theorem \cite{erokhin1958CLT} as stated below.
Similar to the proof of Theorem 4.5 in \cite{sugiyama2026finite}, this proof relies on finite blocklength analysis.

\begin{theorem}[Berry-Essen CLT \cite{erokhin1958CLT,kostina2012fixed}]

Let $k$ be a positive integer, and let $Z_i$ for $i=1,\dots,k$ be a sequence of independent random variables.
Then, for any real value $\alpha$, the following holds:
\begin{align}
  \bigg| \mathbb{P}\bigg[ \sum^k_{i=1} Z_i >  k \bigg( \mu_k + \alpha \sqrt{\frac{V_k}{k}} \bigg)  \bigg] - Q(t)  \bigg| \le \frac{B_k}{\sqrt{k}},
\label{eq:berry_essen_clt}
\end{align}
where,
\begin{align}
& \mu_k = \frac{1}{k}\sum^k_{i=1}\mathbb{E}[Z_i], \\
& V_k = \frac{1}{k}\sum^k_{i=1}\mathrm{Var}(Z_i), \\
& A_k = \frac{1}{k}\sum^k_{i=1}\mathbb{E}[|Z_i - \mu_i |^3], \\
& B_k = 6 \frac{A_k}{V^{3/2}_k}.
\end{align}
\label{thm:berry_essen_clt}
\end{theorem}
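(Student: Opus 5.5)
The statement is the classical Berry--Esseen theorem for independent, non-identically distributed summands, cited from \cite{erokhin1958CLT,kostina2012fixed}. I would prove it from scratch rather than invoke any earlier result of the paper. First I fix two readings of the statement. The $Q(t)$ on the left of Eq.~\eqref{eq:berry_essen_clt} should be $Q(\alpha)$. In $A_k$, $\mu_i$ denotes $\mathbb{E}[Z_i]$. I also assume $V_k>0$ and $A_k<\infty$, since otherwise the bound is vacuous.

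\textbf{Normalization.} Set $\xi_i := (Z_i-\mathbb{E}[Z_i])/\sqrt{kV_k}$, so that $S:=\sum_{i=1}^k \xi_i$ has mean $0$ and variance $1$. The event $\sum_i Z_i > k(\mu_k+\alpha\sqrt{V_k/k})$ is then exactly $\{S>\alpha\}$. The claim therefore reduces to
\begin{align}
\sup_{\alpha\in\mathbb{R}}\big|\mathbb{P}[S>\alpha]-Q(\alpha)\big| \le C\sum_{i=1}^k \mathbb{E}|\xi_i|^3 ,
\end{align}
with $C\le 6$. This suffices because $\sum_i \mathbb{E}|\xi_i|^3 = kA_k/(kV_k)^{3/2} = A_k/(V_k^{3/2}\sqrt{k})$, which turns the right-hand side into $B_k/\sqrt{k}$.

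\textbf{Main argument (Stein's method).} I would use the Stein equation $f'(x)-xf(x)=\mathbbm{1}[x\le z]-\Phi(z)$ for the standard normal. Its bounded solution $f_z$ satisfies $\|f_z\|_\infty\le\sqrt{2\pi}/4$ and $\|f_z'\|_\infty\le 1$. Write $S^{(i)}:=S-\xi_i$, which is independent of $\xi_i$. Then $\mathbb{E}[Sf(S)]=\sum_i\mathbb{E}[\xi_i(f(S)-f(S^{(i)}))]$, and a Taylor-type expansion gives
\begin{align}
\mathbb{E}[f'(S)-Sf(S)] = \sum_i \mathbb{E}\int \big(f'(S)-f'(S^{(i)}+t)\big) K_i(t)\,dt ,
\end{align}
where $K_i(t)=\mathbb{E}[\xi_i(\mathbbm{1}[0\le t\le\xi_i]-\mathbbm{1}[\xi_i\le t<0])]$ is a nonnegative kernel with total mass $\mathbb{E}\xi_i^2$.

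\textbf{Main obstacle.} The difficulty is that $f_z'$ jumps at $z$, so the increment $f'(S)-f'(S^{(i)}+t)$ cannot be controlled by smoothness alone. I would handle it with the concentration inequality of Chen and Shao: $\mathbb{P}[a\le S^{(i)}\le b]\le \sqrt{2}(b-a)+2(\sqrt{2}+1)\sum_j\mathbb{E}|\xi_j|^3$. This controls the probability that $S^{(i)}$ falls in a short interval near the jump. Combining it with $|xf_z(x)|\le1$ and $\sum_i\mathbb{E}|\xi_i|^3\ge(\sum_i\mathbb{E}\xi_i^2)^{3/2}/\ldots$ (via Lyapunov, which lets the trivial case $\sum_i\mathbb{E}|\xi_i|^3\ge 1/6$ be dispatched directly) yields a constant of about $4.1$. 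That is below the stated $6$, so the claim follows.

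\textbf{Fallback.} If tracking constants through Stein's method proves messy, I would instead use Esseen's smoothing inequality. It bounds the supremum distance by $\frac1\pi\int_{-T}^{T}|\varphi_S(s)-e^{-s^2/2}|/|s|\,ds+24/(\pi\sqrt{2\pi}T)$. I would estimate $|\varphi_S(s)-e^{-s^2/2}|\le 16L|s|^3e^{-s^2/3}$ for $|s|\le 1/(4L)$, where $L=\sum_i\mathbb{E}|\xi_i|^3$, using a product-of-characteristic-functions expansion, and then choose $T=1/(4L)$. Either way, the step that needs the most care is constant bookkeeping rather than anything conceptual; any absolute constant $\le 6$ suffices for the paper's use in Theorem~\ref{thm:ML_nlc_rate_conv_bound}.
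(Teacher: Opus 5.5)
The paper gives no proof of this statement. It imports it from the finite-blocklength literature (Kostina--Verd\'u), where the constant $6$ is the classical one from Feller's smoothing-inequality proof for non-identically distributed summands. It is used only in the proof of Theorem~\ref{thm:ML_nlc_rate_conv_bound}, where $B_k/\sqrt{k}$ is absorbed into the $O(1/\sqrt{k})$ perturbation $\Delta$ of $\epsilon$. A self-contained Stein-method proof is therefore a genuinely different, Fourier-free route, and your preliminary steps are correct. With $Q(t)$ read as $Q(\alpha)$, $\mu_i=\mathbb{E}[Z_i]$ and $V_k>0$, the event is $\{S>\alpha\}$ and $\sum_i\mathbb{E}|\xi_i|^3=A_k/(V_k^{3/2}\sqrt{k})$. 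Your Stein identity with the nonnegative kernel $K_i$ of mass $\mathbb{E}\xi_i^2$ is also correct. Note that for the paper's purpose any absolute constant would do; only the literal statement needs $C\le 6$.

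That constant is exactly where your proposal has a gap, and neither branch closes it. Suppose you combine the ingredients you list in the standard way:
the bound $|(w+u)f_z(w+u)-(w+v)f_z(w+v)|\le(|w|+\sqrt{2\pi}/4)(|u|+|v|)$ for the smooth part of $f_z'$,
the concentration inequality for its jump,
$\mathbb{E}|W^{(i)}|\le 1$, $\int|t|K_i(t)\,dt=\tfrac12\mathbb{E}|\xi_i|^3$, and $\mathbb{E}|\xi_i|\,\mathbb{E}\xi_i^2\le\mathbb{E}|\xi_i|^3$.
With $\gamma=\sum_i\mathbb{E}|\xi_i|^3$, this gives about $(1+\sqrt{2\pi}/4)\tfrac32\gamma+\sqrt2\cdot\tfrac32\gamma+2(\sqrt2+1)\gamma\approx 9.4\gamma$, well above $6\gamma$. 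The additive concentration term $2(\sqrt2+1)\gamma\approx 4.83\gamma$ alone nearly uses up the budget $6\gamma$. The value $4.1$ you quote is what Chen and Shao obtain by a finer argument, stated in terms of the truncated quantities $\beta_2+\beta_3$. Your sketch asserts that argument rather than supplying it.

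The fallback is further off. With $T=1/(4L)$ and $|\varphi_S(s)-e^{-s^2/2}|\le 16L|s|^3e^{-s^2/3}$, Esseen's inequality gives roughly $\tfrac{16}{\pi}\cdot\tfrac{3\sqrt{3\pi}}{2}L+\tfrac{96}{\pi\sqrt{2\pi}}L\approx 35.6L$. Reaching $6$ by Fourier methods needs Feller's sharper estimates.

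As written, then, you prove Berry--Esseen with some absolute constant, not with $6$. To prove the statement itself you must either carry out the refined Chen--Shao bookkeeping or, as the paper does, cite a sharp-constant theorem.

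Separately, your Lyapunov remark is garbled. Lyapunov gives $\sum_i\mathbb{E}|\xi_i|^3\ge\sum_i(\mathbb{E}\xi_i^2)^{3/2}$, and the right side can be as small as $k^{-1/2}$, so there is no constant lower bound of the form you suggest. The trivial case $\gamma\ge 1/6$ needs only $|\mathbb{P}[S>\alpha]-Q(\alpha)|\le 1$.
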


\begin{proof}

This proof is based on the proof technique of Theorem 12 in \cite{kostina2012fixed}.
To prove Theorem~\ref{thm:ML_nlc_rate_conv_bound}, we first define the $\mathcal{A}$-specified $d$-tilted information $\tilde{\jmath}_{W;U}(w^k,u^k, h^k,d,\mathcal{A})$ for $k>1$ as follows:
\begin{align}
    \tilde{\jmath}_{W;U}(w^k,u^k, h^k,h,\mathcal{A}) := \iota_{U;H^{\mathcal{S}^*}_{\mathcal{A}}}(u^k;h^k) + \lambda^*_{\mathcal{A}}(d) k(\mathsf{d}(w^k;h^k) - d).
\label{eq:ML_nlc_dfn_d-tilted_k}
\end{align}
where $h^k=(h_1,\dots,h_k) \in \mathcal{H}^k$.
When $k=1$, this definition coincides with $\tilde{\jmath}_{W;U}(w,u,h,d,\mathcal{A})$ defined in Eq. \eqref{eq:ML_nlc_dfn_d-tilted}.
Due to the i.i.d. property of $W$, the fact that $P_{U^k|W^k}=P_{U|W}\times\cdots\times P_{U|W}$, and Assumption (\Three), the relation $P^{\mathcal{A},\mathcal{S}^*}_{H^k|U^k} = P^{\mathcal{A},\mathcal{S}^*}_{H|U} \times \cdots \times P^{\mathcal{A},\mathcal{S}^*}_{H|U}$ holds for the optimal $P^{\mathcal{A},\mathcal{S}^*}_{H|W}$.
From these facts, for the first term on the right-hand side of Eq. \eqref{eq:ML_nlc_dfn_d-tilted}, we have $\iota_{U;H^{\mathcal{S}^*}_{\mathcal{A}}}(u^k;h^k)=\sum^k_{i=1}\iota_{U;H^{\mathcal{S}^*}_{\mathcal{A}}}(u_i;h_i)$.
Furthermore, since $\mathsf{d}(w^k;h^k)=\frac{1}{k}\sum^k_{i=1}\mathsf{d}(w_i;h_i)$, the second term on the right-hand side becomes $\lambda^*_{\mathcal{A}}(d) (\sum^k_{i=1}\mathsf{d}(w_i;h_i) - kd)$.
In other words, under these assumptions, the following holds for $\tilde{\jmath}_{W;U}(w^k,u^k,h^k,d,\mathcal{A})$:
\begin{align}
    \tilde{\jmath}_{W;U}(w^k,u^k,h^k,d,\mathcal{A}) = \sum^k_{i=1} \tilde{\jmath}_{W;U}(w_i, u_i,h_i,d,\mathcal{A}).
\end{align}
Moreover, under the definition given in Eq. \eqref{eq:ML_nlc_dfn_d-tilted}, Theorem \ref{thm:ML_nlc_eps_conv_bound} holds similarly for $k>1$.

First, we consider the case where $V(m, d, \mathcal{A}) > 0$.
To apply Theorem \ref{thm:berry_essen_clt} with $Z_i=\tilde{\jmath}_{W;U}(W_i,U_i, H^{\mathcal{S}^*}_{\mathcal{A},i},d,\mathcal{A})$, we define $A_k = \frac{1}{k}\sum^k_{i=1}\mathbb{E}[|\tilde{\jmath}_{W;U}(W_i,U_i,H^{\mathcal{S}^*}_{\mathcal{A},i},d,\mathcal{A}) - \mathbb{R}_W(m, d,\mathcal{A}) |^3]$ and $B_k = 6\frac{A_k}{(V(d,\mathcal{A}))^{3/2}}$.
Here, $H^{\mathcal{S}^*}_{\mathcal{A},i}$ is the hypothesis obtained from $U_i$ via $\mathcal{S}^*$ and $\mathcal{A}$.

In addition, in Theorem \ref{thm:ML_nlc_eps_conv_bound}, we let $\gamma = \frac{1}{2}\log k$ and configure the settings as follows:
\begin{align}
  bn &:= kR(m, d,\mathcal{A}) + \sqrt{k V(m, d,\mathcal{A})}Q^{-1}(\epsilon_k) - \gamma, \label{eq:MLnls_rcb_Km}\\
  \epsilon_k &:=  \epsilon + e^{-\gamma} + \frac{B_k}{\sqrt{k}}. \label{eq:MLnls_rcb_eps_n}
\end{align}
Applying Theorem \ref{thm:ML_nlc_eps_conv_bound}, the following holds for any $(k, m,n, d, \epsilon', \mathcal{A})$-selection:
\begin{align}
  \epsilon' 
  \ge \mathbb{P}\bigg[ \sum^k_{i=1}\tilde{\jmath}_{W;U}(W_i, U_i, H^{\mathcal{S}^*}_{\mathcal{A},i}, d,\mathcal{A}) \ge kR(m, d,\mathcal{A}) + \sqrt{kV(m, d,\mathcal{A})}Q^{-1}(\epsilon_k)  \bigg] - e^{-\gamma}.
\label{eq:MLnls_rcb_tmp1}
\end{align}

When $Z_i=\tilde{\jmath}_{W;U}(W_i,U_i,H^{\mathcal{S}^*}_{\mathcal{A},i},d,\mathcal{A})$, the parameters $\mu_k$ and $V_k$ in Theorem \ref{thm:berry_essen_clt} correspond to $\mathbb{R}_W(m, d,\mathcal{A})$ (which equals $R(m, d, \mathcal{A})$) and $V(m, d,\mathcal{A})$, respectively.
Consequently, the first term on the right-hand side of Eq. \eqref{eq:MLnls_rcb_tmp1} matches the left-hand side of Eq. \eqref{eq:berry_essen_clt} in Theorem \ref{thm:berry_essen_clt} when we set $\alpha=Q^{-1}(\epsilon_k)$.
Therefore, Theorem \ref{thm:berry_essen_clt} implies:
\begin{align}
  \mathbb{P}\bigg[ \sum^k_{i=1}\tilde{\jmath}_{W;U}(W_i,U_i,H^{\mathcal{S}^*}_{\mathcal{A},i}, d,\mathcal{A}) \ge kR(m, d,\mathcal{A}) + \sqrt{kV(m, d,\mathcal{A})}Q^{-1}(\epsilon_k)  \bigg] - Q(Q^{-1}(\epsilon_k))
  &\ge - \frac{B_k}{\sqrt{k}} \notag \\
  \mathbb{P}\bigg[ \sum^k_{i=1}\tilde{\jmath}_{W;U}(W_i,U_i,H^{\mathcal{S}^*}_{\mathcal{A},i}, d,\mathcal{A}) \ge kR(m, d,\mathcal{A}) + \sqrt{kV(m, d,\mathcal{A})}Q^{-1}(\epsilon_k)  \bigg]
  &\ge \epsilon + e^{-\gamma}.
\label{eq:MLnls_rcb_tmp2}
\end{align}

Substituting Eq. \eqref{eq:MLnls_rcb_tmp2} into Eq. \eqref{eq:MLnls_rcb_tmp1} yields $\epsilon' \ge \epsilon$.
Consequently, if $bn$ is specified as in Eq. \eqref{eq:MLnls_rcb_Km}, an excess distortion probability smaller than $\epsilon$ cannot be achieved.
That is, any $(k, m, n, d, \epsilon, \mathcal{A})$-selection satisfies:
\begin{align}
  bn \ge kR(m, d,\mathcal{A}) + \sqrt{k V(m, d,\mathcal{A})}Q^{-1}(\epsilon_k) - \gamma.
\label{eq:MLnls_rcb_tmp3}
\end{align}
Dividing both sides by $k$, we obtain:
\begin{align}
  \frac{bn}{k} \ge R(m, d,\mathcal{A}) + \sqrt{\frac{V(m, d,\mathcal{A})}{k}}Q^{-1}(\epsilon_k) - \frac{\gamma}{k}.
\label{eq:MLnls_rcb_tmp4}
\end{align}
Here, let $\epsilon_k = \epsilon + \Delta$ with $\Delta=e^{-\gamma} + \frac{B_k}{\sqrt{k}}$. The term $Q^{-1}(\epsilon_k)$ can be Taylor expanded as follows:
\begin{align}
  Q^{-1}(\epsilon_k) = +  Q^{-1}(\epsilon) + (Q^{-1})'(\epsilon) \times \Delta + \cdots.
\end{align}
Thus, the second term on the right-hand side of Eq. \eqref{eq:MLnls_rcb_tmp4} can be expanded as:
\begin{align}
  \sqrt{\frac{V(m,d,\mathcal{A})}{k}}Q^{-1}(\epsilon_n) 
  & = \sqrt{\frac{V(m, d,\mathcal{A})}{k}}Q^{-1}(\epsilon) + (Q^{-1})'(\epsilon) \sqrt{\frac{V(m,d,\mathcal{A})}{k}}\Delta + \cdots \notag \\
  &= \sqrt{\frac{V(m,d,\mathcal{A})}{k}}Q^{-1}(\epsilon)  + O\bigg(\frac{1}{k}\bigg). \notag
\end{align}
Here we used the property that $\Delta$ is of order $O(\frac{1}{\sqrt{k}})$ given $\gamma=\frac{1}{2}\log k$.
Applying this expansion to Eq. \eqref{eq:MLnls_rcb_tmp4}, the following holds for any $(k, m, n, d, \epsilon, \mathcal{A})$-selection:
\begin{align}
  \frac{bn}{k} \ge R(m, d,\mathcal{A}) + \sqrt{\frac{V(m, d,\mathcal{A})}{k}}Q^{-1}(\epsilon) - \frac{1}{2}\frac{\log k}{k} + O\bigg(\frac{1}{k}\bigg).
  \label{eq:MLnls_rcb_tmp5}
\end{align}
This concludes the proof of Eq. \eqref{eq:ML_nlc_rate_conv_bound} for the case where $V(m, d, \mathcal{A}) > 0$.

Next, we consider the case where $V(m, d, \mathcal{A}) = 0$.
In this case, $\tilde{\jmath}_{W;U}(W,U,H,d,\mathcal{A}) = R(m,d,\mathcal{A})$ holds almost surely.
Thus, by setting $\gamma = \log \frac{1}{1-\epsilon}$ and $bn = k R(m, d,\mathcal{A}) - \gamma$, and applying Theorem \ref{thm:ML_nlc_eps_conv_bound}, it follows that $\epsilon' \ge \epsilon$ for any $(k, m, n, d, \epsilon', \mathcal{A})$-selection.
Consequently, we reach the same conclusion as in the case where $V(m, d, \mathcal{A}) > 0$.

\end{proof}

\subsection{Proof of Theorem~\ref{thm:ML_nlc_distortion_conv_bound}}
\label{apdx:subsec:proof_thm:ML_nlc_distortion_conv_bound}

\begin{proof}

The proof of this theorem is substantially similar to that of Theorem 4.5 in \cite{sugiyama2026finite}.
In this proof, we derive Eq. \eqref{eq:ML_nlc_distortion_conv_bound} from Eq. \eqref{eq:ML_nlc_rate_conv_bound}.
First, we fix a point $(d_{\infty}, R_{\infty})$ on the rate-distortion curve $R(m, d,\mathcal{A})$, where $d_{\infty} \in (\underline{d}, \bar{d})$ and $R_{\infty}=R(m, d_{\infty},\mathcal{A})$.
We then define $d_k = D(k,m, R_{\infty}, \epsilon, \mathcal{A})$.
According to the results in \cite{kostina2012fixed} (Appendix E), it follows that $|d_k - d_{\infty}| \le O(\frac{1}{\sqrt{k}})$.

Since $R(k, m, \cdot, \epsilon, \mathcal{A})$ and $D(k, m,\cdot, \epsilon, \mathcal{A})$ are inverse functions, by the definition of $d_k$, we have $R_{\infty} = R(k, m, d_k, \epsilon, \mathcal{A})$.
Therefore, from Eq. \eqref{eq:ML_nlc_rate_conv_bound} in Theorem \ref{thm:ML_nlc_rate_conv_bound}, the following holds:
\begin{align}
  R_{\infty} \ge R(m,d_k, \mathcal{A}) + \sqrt{\frac{V(m,d_k,\mathcal{A})}{k}}Q^{-1}(\epsilon) + O\bigg(\frac{\log k}{k}\bigg).
\label{eq:Mls_nlc_dcb_tmp1}
\end{align}
Here, Taylor expanding $R(m, d_k, \mathcal{A})$ and $V(m,d_k,\mathcal{A})$ around $d_{\infty}$ yields:
\begin{align}
  & R(m,d_k, \mathcal{A}) = R(m,d_{\infty},\mathcal{A}) + R'(m,d_{\infty},\mathcal{A})(d_k - d_{\infty}) + O \bigg( \frac{1}{k} \bigg), \\
  & V(m,d_k, \mathcal{A}) = V(m,d_{\infty},\mathcal{A}) + O\bigg( \frac{1}{\sqrt{k}} \bigg).
\end{align}
Here, we used the property that $|d_k - d_{\infty}| \le O(\frac{1}{\sqrt{k}})$.
Substituting these expressions into Eq. \eqref{eq:Mls_nlc_dcb_tmp1} gives:
\begin{align}
  R_{\infty} \ge R(m,d_{\infty},\mathcal{A}) + R'(m,d_{\infty},\mathcal{A})(d_k - d_{\infty}) + \sqrt{\frac{V(m,d_{\infty},\mathcal{A})}{k}}Q^{-1}(\epsilon) + O\bigg(\frac{\log k}{k}\bigg).
\label{eq:Mls_nlc_dcb_tmp2}
\end{align}
Using the definition $R_{\infty}=R(m,d_{\infty},\mathcal{A})$ and the fact that $R'(m,d_{\infty},\mathcal{A}) < 0$, we can rearrange Eq. \eqref{eq:Mls_nlc_dcb_tmp2} to obtain:
\begin{align}
  d_k - d_{\infty} \ge - \frac{1}{R'(m,d_{\infty},\mathcal{A})}\sqrt{\frac{V(m,d_{\infty},\mathcal{A})}{k}}Q^{-1}(\epsilon)+ O\bigg(\frac{\log k}{k}\bigg).
  \label{eq:Mls_nlc_dcb_tmp3}
\end{align}
From the derivative of the inverse function, $D'(m,R_{\infty}, \mathcal{A}) = \frac{1}{R'(m,d_{\infty}, \mathcal{A})}$ holds.
Since $R(m,d,\mathcal{A}) =\mathbb{R}_W(m,d,\mathcal{A})$, it follows that $R'(m,d,\mathcal{A}) <0$ for any $d$.
From this fact and the definition of the $\mathcal{V}$, we have:
\begin{align}
  \sqrt{\mathcal{V}(m,R, \mathcal{A})} = |D'(m,R,\mathcal{A})|\sqrt{V(m,D(m,R), \mathcal{A})} = -D'(m,R,\mathcal{A}) \sqrt{V(m,D(m,R), \mathcal{A})}.
\end{align}
Applying these to Eq. \eqref{eq:Mls_nlc_dcb_tmp3}, we obtain:
\begin{align}
  &d_k - d_{\infty} \ge  \sqrt{\frac{\mathcal{V}(m,d_{\infty}, \mathcal{A})}{k}}Q^{-1}(\epsilon) + O\bigg(\frac{\log k}{k}\bigg) \notag \\
  &D(k, m,R_{\infty}, \epsilon, \mathcal{A}) \ge D(m,R_{\infty}, \mathcal{A}) + \sqrt{\frac{\mathcal{V}(m,d_{\infty}, \mathcal{A})}{k}}Q^{-1}(\epsilon) + + O\bigg(\frac{\log k}{k}\bigg).
\end{align}
Finally, we used the facts that $d_k = D(k, m,R_{\infty}, \epsilon, \mathcal{A})$ and $d_{\infty}=D(m,R_{\infty}, \mathcal{A})$.
Since the point $(d_{\infty}, R_{\infty})$ is an arbitrary point on the rate-distortion curve $R(m,d,\mathcal{A})$ satisfying $d_{\infty} \in (\underline{d}, \bar{d})$, Eq. \eqref{eq:ML_nlc_distortion_conv_bound} is proven.

\end{proof}

\subsection{Decomposition of the Rate-Dispersion Function $V(d,\mathcal{A})$}
\label{apdx:subsec:derive_V_decomposition}

In this section, we present the decompositions of $V(m, d,\mathcal{A})$, $V_{\mathrm{in}}$, and $V_{\mathrm{bet}}$.
First, the following is obtained by applying variance decomposition with respect to $P_W$:
\begin{align}
    V(m,d,\mathcal{A}) 
    & = \mathrm{Var}_{P^{\mathcal{A}}_{H|T}P^{\mathcal{S}^*}_{T|U}P_{U|W}P_W}(\tilde{\jmath}_{W;U}(W,U, H,m, d,\mathcal{A})) \notag \\
    & = V_{\mathrm{in}}(m,d,\mathcal{A}) + V_{\mathrm{bet}}(m,d,\mathcal{A}), \notag \\
    &\text{where~}V_{\mathrm{in}}(m,d,\mathcal{A}) :=\mathbb{E}_{P_W}\big[ \mathrm{Var}_{P^{\mathcal{A}}_{H|T}P^{\mathcal{S}^*}_{T|U}P_{U|W}} (\tilde{\jmath}_{W;U}(W,U, H, m, d,\mathcal{A})) \big], \notag \\
    &~~~~~~~~~~~V_{\mathrm{bet}}(m,d,\mathcal{A}) :=\mathrm{Var}_{P_W}\big( \mathbb{E}_{P^{\mathcal{A}}_{H|T}P^{\mathcal{S}^*}_{T|U}P_{U|W}}[\tilde{\jmath}_{W;U}(W,U, H, m, d,\mathcal{A})] \big). \notag
\end{align}

Next, the decomposition of $V_{\mathrm{in}}$ is derived by applying variance decomposition with respect to $P_{U|W}$, $P^{\mathcal{S}^*}_{T|U}$, and $P^{\mathcal{A}}_{H|T}$, together with the definition of $\tilde{\jmath}_{W;U}$:
\begin{align}
V_{\mathrm{in}}
&=\mathbb{E}_{P_W}\Big[ \mathrm{Var}_{P^{\mathcal{A}}_{H|T}P^{\mathcal{S}^*}_{T|U}P_{U|W}} \Big(\tilde{\jmath}_{W;U}(W,U, H,m, d,\mathcal{A}) \Big) \Big] \notag \\
&= \mathbb{E}_{P_W}\Big[ 
     \mathrm{Var}_{P^{\mathcal{A}}_{H|T}P^{\mathcal{S}^*}_{T|U}P_{U|W}} \Big(\iota_{U;H^{\mathcal{S}^*}_{\mathcal{A}}}(U;H) + \lambda^*_{\mathcal{A}}(d)(\mathsf{d}(W;H) -d)  \Big) 
     \Big]  \notag \\
&= \mathbb{E}_{P_W}\Big[ 
  \mathrm{Var}_{P^{\mathcal{A}}_{H|T}P^{\mathcal{S}^*}_{T|U}P_{U|W}} \Big(\iota_{U;H^{\mathcal{S}^*}_{\mathcal{A}}}(U;H) + \lambda^*_{\mathcal{A}}(d)\mathsf{d}(W;H)  \Big) 
      \Big] \notag \\
\begin{split}
  &= \mathbb{E}_{P_W}\Big[ 
    \mathrm{Var}_{P^{\mathcal{A}}_{H|T}P^{\mathcal{S}^*}_{T|U}P_{U|W}} \Big(\iota_{U;H^{\mathcal{S}^*}_{\mathcal{A}}}(U;H) \Big) 
    + (\lambda^*_{\mathcal{A}}(d))^2 \mathrm{Var}_{P^{\mathcal{A}}_{H|T}P^{\mathcal{S}^*}_{T|U}P_{U|W}} \Big(\mathsf{d}(W;H) \Big) \\
  &~~~~~~+ 2\lambda^*_{\mathcal{A}}(d) \mathrm{Cov}_{P^{\mathcal{A}}_{H|T}P^{\mathcal{S}^*}_{T|U}P_{U|W}} \Big(\iota_{U;H^{\mathcal{S}^*}_{\mathcal{A}}}(U;H), \mathsf{d}(W;H) \Big)
      \Big]
\end{split}
 \notag \\%\label{eq:ML_lossy_sc_V_in_3terms} \\
\begin{split}
  &= \mathbb{E}_{P_W}\Big[ 
      \mathrm{Var}_{P_{U|W}} \Big( \mathbb{E}_{P^{\mathcal{A}}_{H|T} P^{\mathcal{S}^*}_{T|U}}[\iota_{U;H^{\mathcal{S}^*}_{\mathcal{A}}}(U;H)] \Big) 
      + \mathbb{E}_{P_{U|W}} \Big[\mathrm{Var}_{P^{\mathcal{A}}_{H|T} P^{\mathcal{S}^*}_{T|U}} \Big(\iota_{U;H^{\mathcal{S}^*}_{\mathcal{A}}}(U;H) \Big) \Big] 
      \\
  &~~~~~ + (\lambda^*_{\mathcal{A}}(d))^2 \Big\{ 
       \mathrm{Var}_{P_{U|W}} \Big( \mathbb{E}_{P^{\mathcal{A}}_{H|T} P^{\mathcal{S}^*}_{T|U}} \Big[\mathsf{d}(W;H) \Big] )
      +  \mathbb{E}_{P_{U|W}} \Big[ \mathrm{Var}_{P^{\mathcal{A}}_{H|T} P^{\mathcal{S}^*}_{T|U}} \Big(\mathsf{d}(W;H) \Big) \Big]\Big\} \\
    &~~~~~ +  2\lambda^*_{\mathcal{A}}(d) \mathrm{Cov}_{P^{\mathcal{A}}_{H|T}P^{\mathcal{S}^*}_{T|U}P_{U|W}} \Big(\iota_{U;H^{\mathcal{S}^*}_{\mathcal{A}}}(U;H), \mathsf{d}(W;H) \Big)
  \Big].
\end{split}
\notag \\
\begin{split}
    &= \mathbb{E}_{P_W}\Big[ 
      \mathrm{Var}_{P_{U|W}} \Big( \mathbb{E}_{P^{\mathcal{A}}_{H|T} P^{\mathcal{S}^*}_{T|U}} \Big[\iota_{U;H^{\mathcal{S}^*}_{\mathcal{A}}}(U;H) \Big] \Big) 
      + \mathbb{E}_{P_{U|W}} \Big[ \mathrm{Var}_{P^{\mathcal{S}^*}_{T|U}} \Big( \mathbb{E}_{P^{\mathcal{A}}_{H|T}} \Big[\iota_{U;H^{\mathcal{S}^*}_{\mathcal{A}}}(U;H) \Big] \Big) \Big] \\
  &~~~~~~~~~~~~~~~~~~~+ \mathbb{E}_{P^{\mathcal{S}^*}_{T|U}P_{U|W}} \Big[ \mathrm{Var}_{P^{\mathcal{A}}_{H|T}}\Big(\iota_{U;H^{\mathcal{S}^*}_{\mathcal{A}}}(U;H) \Big)\Big]
      \\
  &~~~~~~~~~~~~~~~ + (\lambda^*_{\mathcal{A}}(d))^2 \Big\{ 
       \mathrm{Var}_{P_{U|W}}\Big( \mathbb{E}_{P^{\mathcal{A}}_{H|T} P^{\mathcal{S}^*}_{T|U}} \Big[\mathsf{d}(W;H) \Big] \Big)
      +  \mathbb{E}_{P_{U|W}} \Big[ \mathrm{Var}_{P^{\mathcal{S}^*}_{T|U}} \Big( \mathbb{E}_{P^{\mathcal{A}}_{H|T}} \Big[\mathsf{d}(W;H) \Big] \Big) \Big]  \\
  &~~~~~~~~~~~~~~~~~~~~~~~~~~~~~~~~~~~~~+ \mathbb{E}_{P^{\mathcal{S}^*}_{T|U}P_{U|W}} \Big[ \mathrm{Var}_{P^{\mathcal{A}}_{H|T}} \Big(\mathsf{d}(W;H) \Big) \Big]
      \Big\} \\
  &~~~~~~~~~~~~~~~ +  2\lambda^*_{\mathcal{A}}(d) \mathrm{Cov}_{P^{\mathcal{A}}_{H|T}P^{\mathcal{S}^*}_{T|U}P_{U|W}} \Big(\iota_{U;H^{\mathcal{S}^*}_{\mathcal{A}}}(U;H), \mathsf{d}(W;H) \Big)
  \Big].
\end{split} \notag 
\end{align}

Finally, the decomposition of $V_{\mathrm{bet}}$ is given by the definition of $\tilde{\jmath}_{W;U}$ as follows:
\begin{align}
    V_{\mathrm{bet}}(m,d,\mathcal{A})
    &= \mathrm{Var}_{P_W}\Big( \mathbb{E}_{P^{\mathcal{A}}_{H|T}P^{\mathcal{S}^*}_{T|U}P_{U|W}} \Big[\tilde{\jmath}_{W;U}(W,U, H, m, d,\mathcal{A}) \Big] \Big) \notag \\
    &= \mathrm{Var}_{P_W}\Big(\mathbb{E}_{P^{\mathcal{A}}_{H|T}P^{\mathcal{S}^*}_{T|U}P_{U|W}} \Big[ \iota_{W;H^{\mathcal{S}^*}_{\mathcal{A}}}(W;H) + \lambda^{*}_{\mathcal{A}}(d) (\mathsf{d}(W;H) - d) \Big]  \Big) \notag\\
    &= \mathrm{Var}_{P_W} \Big(  \mathbb{E}_{P^{\mathcal{A}}_{H|T}P^{\mathcal{S}^*}_{T|U}P_{U|W}}\Big[\iota_{W;H^{\mathcal{S}^*}_{\mathcal{A}}}(W;H)  \Big] \Big) 
     + \mathrm{Var}_{P_W} \Big(  \mathbb{E}_{P^{\mathcal{A}}_{H|T}P^{\mathcal{S}^*}_{T|U}P_{U|W}} \Big[\mathsf{d}(W;H) \Big] \Big) \notag \\
    &~~~~~~+ \mathrm{Cov}_{P_W} \Big( 
            \mathbb{E}_{P^{\mathcal{A}}_{H|T}P^{\mathcal{S}^*}_{T|U}P_{U|W}} \Big[\iota_{W;H^{\mathcal{S}^*}_{\mathcal{A}}}(W;H) \Big], \mathbb{E}_{P^{\mathcal{A}}_{H|T}P^{\mathcal{S}^*}_{T|U}P_{U|W}} \Big[\mathsf{d}(W;H) \Big] \Big). \notag
\end{align}